\newcommand{\D}{\mathcal{D}}
\newcommand{\Dnat}{\mathcal{D}_{nat}}
\newtheorem{definition}{Definition}
\newtheorem{lemma}{Lema}
\newtheorem{theorem}{Theorem}
\begin{document}
\title{A metric for sets of trajectories that is\\practical and mathematically consistent}

\author{
\IEEEauthorblockN{Jos\'e Bento}
\IEEEauthorblockA{
jose.bento@bc.edu}\\
\IEEEauthorblockN{Jia Jie Zhu}
\IEEEauthorblockA{
zhuuv@bc.edu}\\
}

\maketitle
%
%
\begin{abstract}
Metrics on the space of sets of trajectories are important for scientists in the field of computer vision, machine learning, robotics, and general artificial intelligence.
However, existing notions of closeness between sets of trajectories are either
mathematically inconsistent or of limited practical use.
In this paper, we outline the limitations in the current 
mathematically-consistent metrics, which are based
on OSPA \cite{schuhmacher2008consistent}; and the inconsistencies
in the heuristic notions of closeness used in practice, whose
main ideas are common to the CLEAR MOT measures
\cite{keni2008evaluating} widely used in computer vision.
In two steps, we then propose a new intuitive metric
between sets of trajectories and address these limitations. 
First, we explain a solution that
leads to a metric that is hard to compute. Then we modify this
formulation to obtain a metric that is easy to compute while
keeping the useful properties of the previous metric.
Our notion of closeness
is the first demonstrating the following three features: the metric 1) can be quickly computed, 2) incorporates confusion of trajectories' identity in an
optimal way, and 3) is a metric in the mathematical sense.
\end{abstract}

\IEEEpeerreviewmaketitle

%
%

\section{Introduction} \label{sec:intro}

%
%
Similarity measures for sets of trajectories are very important.
In computer vision, they are used to
evaluate the performance of
multi-object tracking algorithms.
If $GT$ and $O$ are the ground-truth and output trajectories of a tracker, a similarity measure $\mathcal{D}$ can be used to distinguish a good tracker from a bad one, i.e. $\mathcal{D}(GT,O) = small$ implies $O$ is a good tracker.
In machine learning, algorithms such as \cite{ganti1999clustering,kleinberg2002approximation} and \cite{yianilos1993data}, can only cluster, classify and do a nearest neighbor
search on sets of trajectories, if we have a similarity measure.

Given their importance, one would expect that existing widely-used measures would be easy to compute and would not produce counter-intuitive results.
Surprisingly, this is not the case, leaving a critical
problem unsolved.

The main limitation of most similarity measures is that they are not a mathematical metric. For example,
the CLEAR MOT measures, widely used to evaluate the performance of trackers,
are based on a heuristic and are not a metric. 
If a measure does not satisfy, for example, the triangle inequality (cf. Section \ref{sec:setup_and_notation} ), then we cannot guarantee that two good trackers (good according to this measure) produce similar outputs, which is counter-intuitive.
In addition to this
inconsistency, the CLEAR MOT also produce other counter-intuitive results (cf. Section \ref{sec:limitations_MOT}). 
Furthermore, these results indirectly affect many
other similarity measures that internally use the CLEAR MOT, or similar heuristics,
to precompute an association between two sets of trajectories (e.g. \cite{blackman1986multiple,rothrock2000performance,bar2004estimation,gorji2011performance}).

However, even existing measures that are a metric, and that, we believe, are all variants of OSPA, can produce unreasonable results in simple scenarios (cf. Section \ref{sec:diff_scenarios}). We use OSPA-ST to refer to OSPA applied to sets of trajectories since it was not originally defined for this setting.

On the positive side, both the CLEAR MOT and the OSPA-based metrics are easy to define and have a computation time that scales well with the number and length of trajectories. These two attributes are part of the reason why the CLEAR MOT are popular.

In this paper we introduce the first measures that are simultaneously a mathematical metric, do not replicate the counter-intuitive results of OSPA-ST or CLEAR MOT and can also be quickly computed (cf. Section \ref{sec:main_results}).

In the next section, we use an example to introduce the primary technical
challenge in defining a measure for sets of trajectories and how this is addressed by OSPA-ST, the
CLEAR MOT, and our metrics.
We focus on OSPA-ST and the CLEAR MOT
because they give a simple but powerful overview of the main ideas leading up to our work.  
We include a detailed list of related work in Section \ref{sec:relwork}.

%

%
%

\vspace{-0.3cm}
\section{Main challenge: the association problem} \label{sec:diff_scenarios}
\vspace{-0.5cm}
\begin{figure}[h]
\begin{center}
\includegraphics[height=3.8cm]{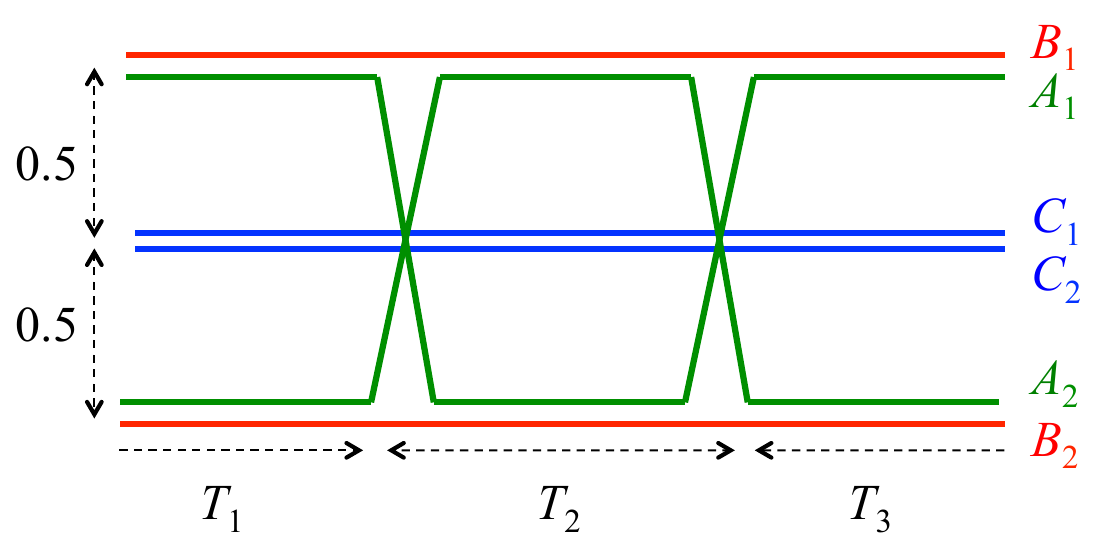}
\vspace{-0.3cm}
\caption{Two people, $A_1$ and $A_2$, move along a line and are followed by two trackers that generate outputs $B_1$, $B_2$, $C_1$ and $C_2$. For visualization purposes, trajectories that are close
		are actually on top of each other.}
\label{fig:toy_example}
\end{center}
\end{figure}
\vspace{-0.3cm}

Figure \ref{fig:toy_example} shows three sets of trajectories $A$, $B$ and $C$. Set $A$ has the ground truth trajectories of two people that start at a distance $1.0$ from each other and, twice, quickly exchange positions; $B$ and $C$ are the trajectories that two different tracking
algorithms output. We represent time on the $x$-axis (left to right) in frames and space on the $y$-axis (top to bottom). For convenience, $T_1$, $T_2$ and $T_3$ are normalized to sum to $1.0$.
We seek a distance measure $\mathcal{D}$ for sets of trajectories to determine if $B$ (and/or $C$) is a good tracker.

If $A$ and $B$ only contained one trajectory, say $A_1$ and $B_1$, we could define $\mathcal{D}(A,B)$ 
by computing the distance between $A_1$ and $B_1$ at each frame and returning the average distance over frames.
Since this is not the
case, the next idea is to define $\mathcal{D}$ in two steps. \emph{Step 1}: determine, for each frame, if we should compute the distance between $A_1$ and $B_1$ and between $A_2$ and $B_2$ {\it or} if we should compute the distance between 
$A_1$ and $B_2$ and between $A_2$ and $B_1$.
We refer to step 1 as \emph{finding an association} between $A$ and $B$. If, at a certain frame, we decide to compute the distance between $A_1$ and $B_2$ and between $A_2$ and $B_1$, then we say that, at that frame, \emph{we associate $A_1$ with $B_2$ and $A_2$ with $B_1$}. Later, we formally define associations using permutations (See Section \ref{sec:setup_and_notation}).
\emph{Step 2}: given this association,
compute the average distance between associated points.

The main problem in defining $\mathcal{D}$ is to establish an association, i.e. step 1 above.
Loosely speaking, typical approaches involve choosing an association that makes
the average distance between associated points small.
We can divide different approaches in three cases. The association between $A$ and $B$ {(i)} can change freely from frame to frame; {(ii)} cannot change; and {(iii)} can change from frame to frame
but we pay a (smaller/higher) cost for (smaller/bigger) changes. We call this last cost
a \emph{switching cost}. We name the total (or average) distance between associated points the \emph{distance cost}.

In (i), a ``good tracker'' tracks position accurately even if it does not track
peoples' identities (ID) correctly.
At each frame, we associate $A$ to $B$ such that the sum of the distance
between associated points is minimal. For Fig \ref{fig:toy_example}, we obtain an average distance between $A$ and $B$ (over frames and objects) that is $\mathcal{D}(A,B)=0$ and an average distance between $A$ and $C$ that is $\mathcal{D}(A,C)=0.5$. Tracker $B$ is better than $C$. Notice that from $T_1$ to $T_2$
track $B_1$ changes the person it is tracking.
We call this as an \emph{identity switch}.

In (ii), the tracker's output $B_1$ must either be associated with $A_1$ for all frames
or associated with $A_2$ for all frames. $B_2$ is associated
with the other trajectory in $A$.
Between these two possibilities, we again choose the one
that minimizes the average distance (over frame and objects).
Now $\mathcal{D}(A,B)=\min\{T_2, 1-T_2\}$ and
$\mathcal{D}(A,C)=0.5$. Tracker $C$ can be as good as tracker $B$ (for $T_2 = 1/2$), which is counter-intuitive, because $C$'s output is never close to the ground-truth. 

In (iii), a ``good tracker'' must trade off some position accuracy (distance cost) for some ID accuracy (switching cost).
The switching cost penalizes identity switches.
It is also evident that category (iii) includes (i) and (iii) as particular cases of extreme tradeoffs.

The most widely used measures
fall in category (iii).
In computer vision, the prototypical example
is MOTP, one of the CLEAR MOT measures. 
At each frame, MOTP heuristically tries to maintain the association between $A$ and $B$ as close as possible to
the association made in the previous frame. It makes corrections to the
association only if the association created in the previous frame applied to the current frame produces distances between matched points that are larger than a certain threshold $thr_{MOT}$.
In the first frame, MOTP uses the association that minimizes the total distance.
(cf. Definition \ref{def:MOT_association} in Appendix \ref{app:proof_of_th_mota_bad_association} for a formal definition).
By changing this threshold one can have different results. We always have $\mathcal{D}(A,C) = 0.5$. If $thr_{MOT}$ is small,
then $\mathcal{D}(A,B) = 0$, because we switch association twice. However, if 
$thr_{MOT}$ is large, $\mathcal{D}(A,B) = T_2$, because we never change from the association made in the first frame. 
Unfortunately, this heuristic leads to MOTP not being a metric
and to other counter-intuitive results (cf. Section \ref{sec:limitations_MOT}). 
In addition, all metrics (based on OSPA) fall in category (ii)
because their associations are fix in time.
Thus, the applications where they produce intuitive
results are limited. We give another example of how OSPA-ST
can produce counter-intuitive results in Appendix \ref{app:extra}.

Like the CLEAR MOT, our measures fall in category (iii). However, ours do not compute the association between $A$ and $B$ heuristically or sequentially. Rather, we solve a \emph{global optimization problem} such that the sum of the distance cost and the switching cost over all time frames is minimized. Thus we avoid counter-intuitive results and obtain a metric.

To appreciate the difference between optimizing associations globally or sequentially, notice that, for Figure \ref{fig:toy_example}, MOTP either does not change the association across frames or changes it twice, from $T_1$ to $T_2$ and from $T_2$ to $T_3$. Regardless of $T_1$, $T_2$ and $T_3$, MOTP never considers changing association just once.
This happens because MOTP uses a fixed threshold to decide when to change associations. Hence, it fails to explore possibly better ways in which to compare $A$ and $B$.

%
%

\vspace{-0.3cm}
\section{Setup and notation} \label{sec:setup_and_notation}

We denote the collection of all finite sets of finite trajectories by $S$. We reserve the letters $A$, $B$ and $C$ to represent finite sets of finite trajectories. 
$A_i$ is the $i^{th}$ trajectory in $A$. Each trajectory $A_i$ is a finite set of time-state pairs
$(t,x)$ with time $t\in \mathbb{N}$ and state $x \in \mathbb{R}^p$. We focus on $t\in \mathbb{N}$ but it is possible to generalize our results to continuous time.
We use $A_i(t)$ to represent the state of the $i^{th}$ trajectory in $A$ at time $t$.

When $A$ and $B$
are defined for all time instants and have the same number of trajectories, we can express $\D(A, B)$ 
using very simple notation.
However, $A_i(t)$ might
not be defined for all values of $t$. In addition, $A$ and $B$ might have a different number of trajectories.
Thus, for mathematical convenience, we define a symbol $*$ and the following extension procedure.

Given $A = \{A_i\}^{m_1}_{i=1}$ and
$B = \{B_i\}^{m_2}_{i=1}$,
we let $m = m_1 + m_2$ and define
$A^+ = \{A^+_i\}^{m}_{i=1}$ and
$B^+ = \{B^+_i\}^{m}_{i=1}$ as follows.
Let $T$ be the largest time
index for which either $A$ or $B$ have
trajectories with defined states.
If $i \leq m_1$,
then, for all $t$ such that $A_i(t)$ is defined, we set $A^+_i(t) = A_i(t)$. Otherwise, $A^+_i(t) = *$.
If $m_1< i \leq m$ then $A^+_i(t) = *$ for all $t \in \{1,...,T\}$. We call these $A^+_i$, $*$-only trajectories. We define $B^+$
in the same way. Now $A^+$ and $B^+$ both have $m$ trajectories and their states are defined in the extended set $\mathbb{R}^p \cup \{*\}$ for all $t\in \{1,...,T\}$. We call $A^+$ and $B^+$ extended sets of trajectories. Figure \ref{fig:notation}-(b) exemplifies this procedure. For example, $A^+_1$ agrees with $A_1$ for all $t$
except for $t=3$ for which $A_1$ is not defined and $A^+_1 = *$. 

The meaning of an instant $t$ for which $A_i(t)$ is not defined, i.e. $A^+_i(t) = *$, depends on the application, e.g. it might mean an occlusion. Other
interpretations are possible, e.g., an object has yet to come into existence. We refrain for adhering to a particular interpretation of $*$. Our use of $*$
resembles \cite{ristic2011metric}, where a null symbol $\emptyset$ is used.

We use $\mathcal{D}:S\times S\mapsto \mathbb{R}^+_0$ to represent distance measures on $S$. If $A,B \in S$, then
$\mathcal{D}(A,B)$ measures the distance between $A$ and $B$.
The main goal of this work is to introduce mathematical metrics that are also easy to compute and produce intuitive results.
Recall that, for any $A,B,C \in S$, a metric $\D$ must satisfy the properties (i) {\bf (non-negativity)} $\mathcal{D}(A,B) \geq 0$, (ii) {\bf (coincidence)} $\mathcal{D}(A,B) = 0$ iff $A = B$, (iii) {\bf (symmetry)} $\mathcal{D}(A,B) = \mathcal{D}(B,A)$ and (iv) {\bf (sub-additivity)} $\mathcal{D}(A,C) \leq \mathcal{D}(A,B) + \mathcal{D}(B,C)$. 

Our definition of $\mathcal{D}$ in the following sections needs two ingredients: an association between extended sets of trajectories and a distance between extended trajectories' states.

\emph{We formally define an association between two sets of $m$ elements using permutations}.
A permutation $\sigma: i \mapsto \sigma_i$ is a bijective map from $\{1,...,m\}$ to itself. $\Pi$ denotes the set of all permutations. 
We define the composition of $\sigma, \sigma' \in \Pi$ as $\sigma' \circ \sigma: i \mapsto \sigma'_{\sigma_i}$. $\sigma^{-1}$ is the inverse map of the bijection $\sigma$.

An association $\sigma \in \Pi$ between $A$ and $B$ for which $\sigma_i = j$ tells us that, to compute $\D(A,B)$, we will use distances between the states of $A^+_i$ and the states of $B^+_j$ (See e.g. Def. \ref{eq:DefOSPA}). In this case, we say that $\sigma$ associates trajectory $A^+_i$ to trajectory $B^+_j$.
Because computing $\D(A,B)$ might involve computing 
distances between different pairs of trajectories $A_i$, $B_j$ at different points in time, we extend the terminology ``association'' to also mean a sequence of permutations. We define $\Pi^T = \{\Sigma : \Sigma = (\Sigma(1),\Sigma(2),...,\Sigma(T)), \Sigma(t) \in \Pi \; \forall t\}$ as the set of all length-$T$ sequences of associations. $\Sigma_i(t)$ is the image of $i$ by the map $\Sigma(t) \in \Pi$. We define $\Sigma^{-1} \triangleq (\Sigma(1)^{-1},...,\Sigma(T)^{-1}) \in \Pi^T$ and
$\Sigma' \circ \Sigma \triangleq (\Sigma'(1)\circ \Sigma(1),...,\Sigma'(T)\circ \Sigma(T)) \in \Pi^T$.
An association sequence $\Sigma \in \Pi^T$ between $A$ and $B$ for which $\Sigma_i(t) = j$ tells us that, to compute $\D(A,B)$, we will use the distance between $A^+_i(t)$ and $B^+_j(t)$ (See e.g. Def. \ref{def:MOT_metric}). In this case, we say that $\Sigma$ associates the state $A^+_i(t)$ to the state $B^+_j(t)$ at time $t$.

We use $d:\mathbb{R}^p\times\mathbb{R}^p \mapsto \mathbb{R}^+_0$ to indicate a distance between state elements. 
We define the extended distance $d^+:\mathbb{R}^p\cup\{*\} \times \mathbb{R}^p\cup\{*\} \mapsto \mathbb{R}^+_0$ such that for every
$x,y \in \mathbb{R}^p$ we have (i) $d^+(x,y) = \min\{2M,d(x,y)\}$, 
(ii) $d^+(x,*)= d^+(*,x)= M > 0$ and (iii) $d^+(*,*) = 0$. 
Property (iii) guarantees that placeholders by themselves
do not change the distance between $A$ and $B$.
Property (ii) defines a cost $M$ for comparing two states, one of which is not defined. For example, a larger $M$ might indicate a higher penalty for a missed or false track. Property (i) of $d^+$ is a technicality that makes $d^+$ be a metric.
We define $D^{AB}(t) \in \mathbb{R}^{m \times m}$ with element $(ij)$ equal to $d^+(A^+_i(t),B^+_j(t))$.

In the context of computer vision tracking, having 
$m_2$ $*$-only trajectories in $A^+$, i.e. trajectories will all states equal to $*$, and $m_1$ *-only trajectories in $B^+$, allows two important types of associations. 
\vspace{-0.1cm}
\begin{table}[h!]
\begin{center}
 \begin{tabular}{||c |c|| } 
 \hline
 Symbols &  Meaning  \\
\hline
 \hline
$A$, $A^+$ & Set of trajectories and set of extended trajectories\\ 
 \hline
$A_i(t)$, $A^+_i(t)$ & State of $i^{th}$ trajectory in $A$, and $A^+$, at time $t$\\ 
 \hline
  $*$ & Symbol to mark states outside Euclidean space\\ 
 \hline
  $S$ &  Set of all possible trajectories \\
 \hline
$ \mathcal{D}$ & Distance between trajectory sets\\
 \hline
 $d,d^+$ & Distance between states and between extended states\\
 \hline
  $M$ & Cost between state * and state in  Euclidean space\\  
 \hline
  $D^{AB}_{ij}(t)$ & Distance $d^+$ between extended states $A^+_i(t)$ and $B^+_j(t)$\\  
 \hline
  $\Pi$ & Set of all possible permutations (perms.)\\  
 \hline
  $\sigma$, $\sigma^{-1}$, $\sigma_i$ & Permutation, its inverse, the image of $i$ by the map $\sigma$\\  
 \hline
   $\Sigma$, $\Sigma^{-1}$ & Sequence of permutations, sequence of their inverses\\  
    \hline
   $\Sigma_i(t)$ & The image of $i$ by the map $\Sigma(t)$ in the sequence $\Sigma$\\  
 \hline
  $\Pi^T$ & Set of all possible sequences of perms. of length $T$  \\  
 \hline
  $\mathcal{K}(\Sigma)$ & Switching cost of permutation sequence  $\Sigma$ \\  
 \hline
  $w, W$ & Doubly stochastic matrix (d.s.m.), sequence of d.s.m.\\  
 \hline
   $W_{ij}(t)$ & Entry $(ij)$ of the $t^{th}$ d.s.m. of the sequence $W$\\  
 \hline
  $\mathcal{P}$ & Set of all possible d.s.m. \\
 \hline
$\mathcal{P}^T$ & Set of all possible sequences of d.s.m. of length $T$\\  
 \hline
    $\|.\|$, $\dagger$, \bf{tr} & Operators for matrix norm, transpose, trace \\  
 \hline
\end{tabular}
\end{center}
\vspace{-0.6cm}
\end{table}
In computing $\D(A,B)$, we might not
want to use any distance between a ground-truth trajectory in $A$, say $A_1$, and any reconstructed trajectories in $B$. We get this by associating $A^+_1$ to a $*$-only trajectory in $B^+$. The points in $A_1$ are \emph{miss detections}. A reconstructed trajectory, say $B_1$, might not be related to any ground truth trajectories in $A$. We represent this by associating $B^+_1$ to a $*$-only trajectory in $A^+$. $B_1$ is a \emph{spurious trajectory}. To create the possibility that all ground truth trajectories might be missed and all tracker trajectories might be spurious, we need at least $m_2$ *-only trajectories in $A^+$ and $m_1$ *-only trajectories in $B^+$.

With a particular application in mind, we might want to distinguish occlusions and no-target or penalize missed and false tracks differently. In this case, we might introduce multiple different placeholder symbols during the extension procedure, say $*$ and $\#$, 
and extend the definition of $d^+$ to penalize
different symbol comparisons differently, e.g. $d^+(*,\#) = M_1$, $d^+(*,x) = M_2$, $d^+(\#,x) = M_3$, $d^+(*,*) = d^+(\#,\#) = 0$. 

Not all interpretations and uses of these symbols make sense. Let $B$ be the output of a tracker and $A$ the ground truth. If $B^+$ has one $*$-only trajectory, a miss detection, and $A$ has one $*$-only trajectory, an always occluded object, then $\mathcal{D}(A,B) = 0$, even tough there is a cardinality error. This occurs because the meaning of $*$ in $A$ and $B$ is inconsistent. We could have used a  different symbol for miss detections, $*$, and occluded ground truth, $\#$. Furthermore, it is unreasonable to let always-hidden  objects be part of the ground truth, i.e. let $A$ have one $*$-only trajectory. Otherwise, a tracker is only good if it can find the number of objects behind a curtain. 
\vspace{-0.1cm}
\begin{figure}[h]
\begin{center}
\includegraphics[height=4.8cm]{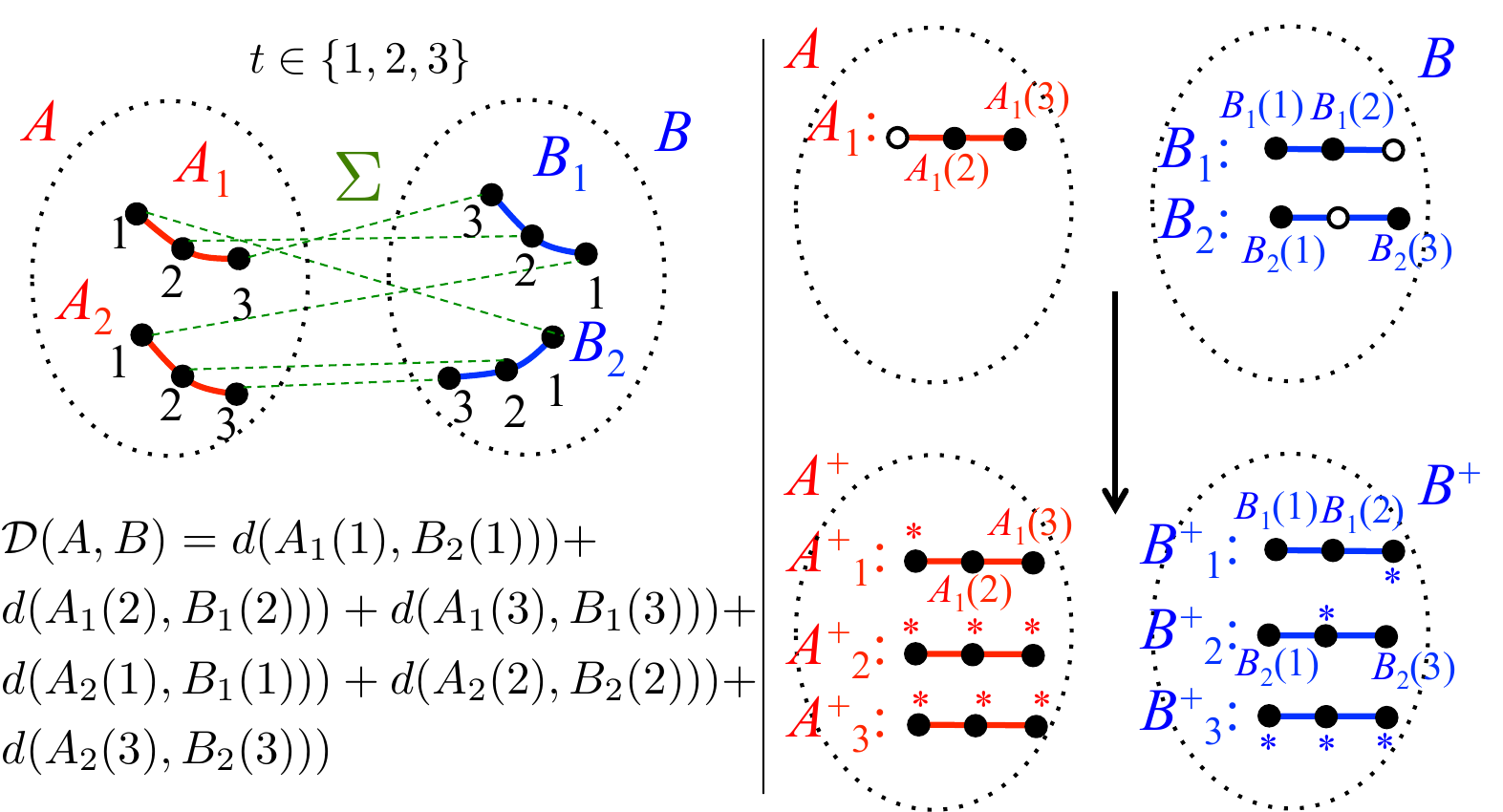}
\put(-250,130){\rotatebox{0}{  (a)}}
\put(-70,130){\rotatebox{0}{\small  (b)}}
\vspace{-0.5cm}
\caption{ (a) Role of associations in computing $\mathcal{D}$; (b) Extension procedure.}
\vspace{-0.3cm}
\label{fig:notation}
\end{center}
\end{figure}

Figure \ref{fig:notation}-(a) illustrates how we
can use an association $\Sigma$ between $A$ and $B$ and distance $d^+$ between
state elements to define a distance 
$\mathcal{D}(A,B)$. In Fig. \ref{fig:notation}-(a) we have dropped $^+$ for
clarity. 
In this figure we have $\Sigma = (\Sigma(1), \Sigma(2),\Sigma(3))$
where $\Sigma(1) = (2,1)$, $\Sigma(2) = (1,2)$, $\Sigma(3) = (1,2)$.

In this paper we denote the set of doubly stochastic matrices as
$\mathcal{P} = \{w \in \mathbb{R}^{m \times m}: 
w^{\dagger} {\bf 1} = {\bf 1},  w {\bf 1} = {\bf 1}, w \geq 0\}$.
We also define
$\mathcal{P}^T = \{W: W = (W(1),...,W(T)), W(t) \in \mathcal{P} \; \forall t\}$ as the set of all length-$T$ sequences of doubly stochastic matrices.

Without specification, $\|\cdot\|$ denotes the Euclidean norm, $\|\cdot\|_1$ denotes the 1-norm, $\dagger$ means transpose and \text{\bf tr} matrix trace.

The above table gathers the most important notation used.

%
%
%
%

\section{Proposed metrics} \label{sec:new_metrics}

Here we introduce two novel families of metrics. We study their properties in Section \ref{sec:main_results}.

Consider a map $\mathcal{K}\hspace{-0.1cm}: \hspace{-0.1cm}\Pi^T \mapsto \mathbb{R}^+_0$ that
gives a score, a switching cost, to sequences of associations and a map $d: \mathbb{R}^p \times \mathbb{R}^p  \mapsto \mathbb{R}^+_0$
that computes distances between trajectories' states. Recall that $D^{AB}_{ij}(t)=d^+(A^+_i(t),B^+_j(t))$ depends on $d$'s definition.
\begin{definition}\label{def:dnat}
The {\bf natural distance} induced by $\mathcal{K}$ and $d$ between two sets of trajectories
is a map $\mathcal{D}_{nat}: \mathcal{S}\times\mathcal{S}\mapsto\mathbb{R}^+_0$ such that for any $A,B \in \mathcal{S}$
{
\begin{align}\label{eq:defDnat}
\mathcal{D}_{nat}(A,B) \hspace{-0.1cm}=\hspace{-0.1cm} \min_{\Sigma \in \Pi^T} \Big\{ \mathcal{K}(\Sigma)
+\sum^T_{t=1} \sum^{m}_{i=1} D^{AB}_{i\, \Sigma_i(t)}(t) \Big\}. 
\end{align}
}
\end{definition}

A natural choice
for $d$ is the Euclidian metric, that is,
$d(x,x') = \|x-x'\|_2$.
One intuitive choice for $\mathcal{K}$ is
$\mathcal{K}_{count}(\Sigma) = \alpha \sum^{T-1}_{t=1} \mathbb{I}(\Sigma(t+1) \neq \Sigma(t) )$, $\alpha > 0$, that basically counts the number of times we change association. 
We give other definitions for $\mathcal{K}$ in Section \ref{sec:main_results}.

Generally, computing $\mathcal{D}_{nat}$ is a combinatorially hard problem
so we introduce a new metric that uses doubly stochastic matrices instead of permutations as associations between $A$ and $B$ \footnote{By the Birkhoff--von Neumann theorem we know that $\mathcal{P}$ is smallest convex set that contains all permutations (represented as permutation matrices).}.
Consider any norm $\|.\|$ on the space of matrices.
\begin{definition}\label{def:dcomp}
The {\bf natural computable distance} induced by $\|.\|$ and $d$ between two sets of trajectories
is a map $\mathcal{D}_{comp}: \mathcal{S}\times\mathcal{S}\mapsto\mathbb{R}^+_0$ such that for any $A,B \in \mathcal{S}$
{
\begin{align}\label{eq:defDcomp}
&\mathcal{D}_{comp}(A,B) =  \min_{W\in \mathcal{P}^T} \Big\{ \sum^{T-1}_{t = 1} \|W(t+1) - W(t)\|\nonumber\\
&+ \sum^T_{t=1} \sum^m_{i,j=1} W_{ij}(t)\, D^{AB}_{ij}(t) \Big\}. 
\end{align}
}
\end{definition}

A few important observations follow. First, notice that both measures
minimize the sum of a distance cost plus a switching cost.
Second, notice that we are defining a family of
measures, not just two measures. Since $d$, $\mathcal{K}$ and $\|.\|$ are generic, \emph{our definitions can easily
incorporate a scaling factor in front of each term of equations
\eqref{eq:defDnat} and \eqref{eq:defDcomp}}. Therefore,
we can control the relative importance of the switching and distance costs.

Third, $\mathcal{P}$ is a convex set and any norm $\|.\|$ is convex
in $\mathcal{P}$, so computing $\mathcal{D}_{comp}$ amounts to solving
a convex optimization problem, for which there are easy-to-use
and efficient packages, e.g. CVX \cite{cvx}. Later we show one choice of $\|.\|$, among several, that reduces \eqref{eq:defDcomp} to a linear program (LP).
From this LP, we can approximate $\mathcal{D}_{nat}$
by forcing the variables to be in $\{0,1\}$ and then use techniques
such as branch-and-bound, e.g. available in MATLAB and CPLEX.

Both $\mathcal{D}_{comp}$ and $\mathcal{D}_{nat}$ solve a global optimization problem to find the best
tradeoff between distance cost and switching cost. As explained in Section \ref{sec:diff_scenarios}, this is in contrast to most
measures used in practice such as MOTP, as explained below.
\begin{definition}\label{def:MOT_metric}
The CLEAR MOT's distance measure for evaluating tracking precision is called MOTP and is defined as 
\begin{equation}\small
\mathcal{D}_{MOTP}(A,B) = \sum^T_{t=1}\sum^m_{i=1} D^{AB}_{i \, \Sigma_{\text{MOT}_i}(t)}(t).
\end{equation}
\end{definition}
In the definition above, $\Sigma_{\text{MOT}} \in \Pi^T$ is the sequence of  associations
that CLEAR MOT builds heuristically (explained in Section \ref{sec:diff_scenarios}
and formally in Appendix \ref{app:proof_of_th_mota_bad_association}).

We also know that $\mathcal{D}_{nat}$ and $\mathcal{D}_{comp}$ are more flexible than OSPA-ST, where associations cannot change with time. To see this, compare \eqref{eq:defDnat} and \eqref{eq:defDcomp} with the definition of OSPA-ST below.
\begin{definition}
The OSPA-ST metric is defined as
\begin{equation}\label{eq:DefOSPA}\small
\mathcal{D}_{OSPA-ST}(A,B) = \min_{\sigma \in \Pi} \sum^T_{t=1}\sum^m_{i=1}  D^{AB}_{i \, \sigma_{i}}(t).
\end{equation}
\end{definition}

Our metrics include OSPA-ST as a particular case.
Indeed, we recover OSPA-ST metric if we choose $\mathcal{K}$ as: $\mathcal{K}_{OSPA-ST}(\Sigma) = 0$ if $\Sigma = (\sigma,\sigma,...,\sigma)$, for some $\sigma \in \Pi$, and
$\mathcal{K}_{OSPA-ST}(\Sigma) = \infty$ otherwise.
In this case, we can compute $\mathcal{D}_{comp}$ in polynomial time, just like for OSPA-ST, using e.g. the Hungarian algorithm \cite{kuhn1955hungarian} (cf. \cite{schuhmacher2008consistent}).

The doubly stochastic matrices $W(t)$ of  $\mathcal{D}_{comp}$, can be transformed into permutations using a projection procedure that amounts to solving a LP. However, we cannot obtain doubly stochastic matrices from $\mathcal{D}_{nat}$.
We can use the associations from $\mathcal{D}_{nat}$ and $\mathcal{D}_{comp}$ to improve the computation of other heuristic measures of multi-target tracking performance, e.g. the number of ID switches or track purity.

Since $d^+(*,*)=0$, the $*$-only trajectories add zero distance cost when associated with each other and so our metrics do not produce irrelevant switches between
$*$-only trajectories.
In addition, although $\mathcal{K}$ is a function of $\Sigma$ but not of $A$ and $B$,
when a particular application is in mind, we can have $\mathcal{K}$ treat switches between different indices differently. We might, for instance,
impose that confusing the ID of tracks $1$ and $20$ (e.g. different-team players) should be heavily penalized but confusing the ID of
tracks $1$ and $4$ (e.g. same-team players) should not. Also, we can
impose that a switch between a detected person and a $*$, a non-detected person, should have a different cost than a switch between two detected persons.

Finally, using duality, and \emph{from now on using more compact matrix notation}, we can equivalently define $\mathcal{D}_{comp}$ (cf. \eqref{eq:defDcomp})
as
{
\begin{align*}
\mathcal{D}_{comp}(A,B) =&  \hspace{-0.0cm}\min_{W \in \mathcal{P}^T}  \sum^T_{t=1} \text{\bf tr}\left(W(t)^{\dagger} D^{AB}(t)\right) \\
&\text{subject to } \sum^{T-1}_{t = 1} \|W(t+1) - W(t)\| \leq \alpha
\end{align*}
}
for some $\alpha$. Acting by analogy, $\mathcal{D}_{nat}$ can also be re-defined in a similar way.
In other words, we do not need to worry about the sum of distance cost and switching cost.
%
%
%
%
\vspace{-0.2cm}
\section{Related work} \label{sec:relwork}

Except for our work, we found no similarity measure
for sets of trajectories that is both mathematically consistent (a metric) and, at the same time, is useful and can deal with identity switches, i.e. it allows time-dependent associations.
We focus our discussion around these characteristics.
However, several of the ideas we review can
be incorporated into our metrics to define new variants
that compete with past work in settings not discussed, given the
scope of this paper.

Our work is related to the general problem of defining a distance
between two sets, however, this is a topic too vast to review in this paper alone.
In \cite{deza2009encyclopedia}, the reader can find many of those
distances. In this article, the two sets $A$ and $B$ are sets of trajectories, and this limits the scope of our discussion.
In the simplest case, when trajectories have only one vector,
typical definitions compute an average or sum distance between
all pairs of elements from $A$ and $B$ or just from a few pairs, e.g. \cite{fujita2013metrics}. Knowing which pairs to use when computing distances requires a
procedure that matches elements of $A$ with elements of $B$, e.g.
\cite{schuhmacher2008consistent,gardner2014measuring}.
In general, however, each trajectory is composed of a set of vectors indexed
by time, which limits our discussion even more.

To the best of our knowledge, the most rigorous works on metrics
for sets of trajectories are based on \cite{schuhmacher2008consistent}.
In \cite{schuhmacher2008consistent} the authors propose a distance for sets of vectors, the optimal sub-pattern
assignment metric (OSPA), and explain its advantages over other distances in the context of multi-object filters.
The OSPA metric has better sensitivity than the Hausdorff metric to cardinality
differences between sets and does not lead to complicated interpretations as does the optimal mass transfer metric (OMAT) \cite{hoffman2004multitarget}, proposed to address
the limitations of the Hausdorff metric. OSPA, a metric for sets
of vectors, can be defined from any distance between two vectors.

All the spin-offs of \cite{schuhmacher2008consistent} focus on designing a metric for two sets of trajectories for the purpose of evaluating the performance of tracking algorithms. We recall however that, apart from tracking, many applications in machine learning and AI benefit if we work with a metric rather than a similarity measure that is not a metric. For example, the algorithms mentioned
in the introduction \cite{ganti1999clustering,kleinberg2002approximation, yianilos1993data} can only cluster, classify and 
find nearest neighbors if $\mathcal{D}$ is a metric.
None of the
spin-offs of \cite{schuhmacher2008consistent} allow associations
that change with time and hence suffer from the
same restricted applicability as OSPA (cf. Section \ref{sec:diff_scenarios}-(ii)).

The OSPA-T metric \cite{ristic2011metric} for sets of trajectories is defined in two steps. First, we optimally match full tracks in $A$ to full tracks in $B$ while considering that
tracks have different lengths and can be incomplete. Second,
we assign labels to each track based on this match and compute
the OSPA metric for each time using a new metric $d$ for pairs of
vectors. $d$ uses both the vectors' components and their labels.
Finally, we sum the OSPA values across all time instants.
Although the first step alone defines a
metric, the authors in \cite{vu2014new} point out that the full two-step procedure that describes OSPA-T is not a metric.

\cite{vu2014new} defines the OSPAMT
to be a metric and be more reliable than OSPA-T.
The OSPAMT metric also optimally matches
complete trajectories in $A$ to complete trajectories in $B$ but
unlike OSPA-T allows us to match one trajectory in $A$ to multiple trajectories in $B$ (and vice-versa).
The authors make this design choice to not penalize a tracker
when it outputs only one track for two objects that
move closely together.

Some extensions to OSPA incorporate measurement uncertainty.
The Q-OSPA metric \cite{QOSPA2013} weighs the distance between pairs of points by the product of their certainty and by adding a new term that is proportional to the product of the uncertainties. The H-OSPA
metric \cite{nagappa2011incorporating} uses
OSAP with distributions as states
instead of vectors and uses the Hellinger distance between
distributions instead of the Euclidean distance between vectors.
Both works only allow sets that contain points, not trajectories.
However, it is easy to combine them with \cite{ristic2011metric}
or \cite{vu2014new} and get a metric for sets of trajectories.

The papers above are relatively recent, as is
the search for mathematical metrics for sets of trajectories.
However, researchers studying
multi-object tracking have been using similarity measures
for sets of trajectories to rank tracking algorithms
for a while. Reviewing all the
work done in this area is impossible. Especially because evaluating tracking performance has many challenges other than
the problem of defining a similarity measure. See 
\cite{ellis2002performance,milan2013challenges} for some
examples. Nonetheless,
we mention various works that relate to our problem.
We emphasize that none of the following works defines
a metric mathematically.

Most of our paper is about finding an association
between the elements in $A$ and $B$ (cf. Section \ref{sec:diff_scenarios}).
In \cite{fridling1991performance}, which is expanded in \cite{drummond1992ambiguities}, the authors are one of the first to identify this as the central problem in defining $\mathcal{D}$, although some of their ideas draw from the much earlier Ph.D. thesis of one of the authors \cite{Drummond75}. They propose
an one-to-one
association between $A$ and $B$ but
this association is optimally computed independently at every time instant. Also, there is no discussion about the number of association changes that might occur.

The CLEAR MOT are used widely in part
because they allow control over the number of times that 
the association between
$A$ and $B$ changes.
It appears that \cite{colegrove1996performance} is one of the first that allow this. Like the CLEAR MOT,
\cite{colegrove1996performance} uses a sequential matching procedure that tries to keep the association from the previous time instant if possible. 
The association that \cite{colegrove1996performance} uses at each time $t$
is not one-to-one optimal like in \cite{fridling1991performance} or in the CLEAR MOT. Rather, the authors use a simple thresholding rule to decide which elements to associate. The idea of using a simple
threshold rule to associate $A$ and $B$ has survived until relatively recently. For example,  
\cite{yin2007performance} match a full trajectory in $A$ to
a whole trajectory in $B$ if they are close in space for a sufficiently long
time interval. The authors in \cite{bashir2006performance} use a similar
thresholding rule.

Shortly after \cite{fridling1991performance} and \cite{drummond1992ambiguities},
some of the same authors discuss again the problem of associating $A$ to $B$.
\cite{drummond1999methodologies} proposes four different
methods for this problem.
First: $A$ and $B$ are optimally
matched independently at each time instant,
possibly generating different associations at every time instant.
Second: the association between $A$ and $B$ cannot change with
time. Like in the OSPA-based metrics, this restricts the measure's
applicability (cf. Section \ref{sec:diff_scenarios}).
Third: allow the association between $A$ and $B$ to change in time
but only in special circumstances. Unfortunately, as they point out, this leads
to an NP-hard problem. Fourth: use heuristics to minimize the number of changes in association. Some of these ideas resemble those used in the CLEAR MOT.

Measuring trackers' performance is not just about 
defining similarity measures for sets of trajectories,
or only concerned with establishing associations between $A$ and $B$.
If we have already matched the elements of $A$ and $B$, which quantity should we compute from this match?
A typical quantity that researchers calculate is
the average or total distance between matched points. Our metrics are a combination of this
quantity with the switching cost in the $A$-$B$ association.
Distance-related quantities apart,
researchers also compute quality measures
for the match itself, e.g. the number of trajectories in $A$ that
do not match to any trajectory in $B$ (spurious tracks and missed tracks) or the number of times the match between $A$ and $B$ changes. Quantities like 
track purity and coalescence are often also computed once a match is
established. \cite{blackman1986multiple,rothrock2000performance,bar2004estimation} and \cite{gorji2011performance} list many other measures of similarity between $A$ and $B$ and measures of quality for individual tracks.

It is worth mentioning a few non-mainstream works.
\cite{pingali1996performance} defines a distance based on comparing the occurrence of
special discrete events in $A$ and $B$. 
\cite{edward2009information} proposes an information theoretic similarity measure. Finally, \cite{porikli2004trajectory} proposes a similarity measure based on hidden Markov models to allow
unequal temporal sampling rates in the trajectories.
%
%
\vspace{-0.0cm}
\section{Limitations of the CLEAR MOT} \label{sec:limitations_MOT}

In the context of visual tracking, the next lemma shows that there are situations in which the average tracking distance cost over time gets arbitrarily close to zero with time while the CLEAR MOT produces an association $\Sigma_{\text{MOT}}$ that fails to see that. In other words, the CLEAR MOT can be counter-intuitive. 
Since none of the definitions of Section
\ref{sec:new_metrics} deal with time averages, in this section we use the following two definitions.

For any $\Sigma \in \Pi^T$, $A,B \in \mathcal{S}$ we let
\begin{equation}\label{eq:swi_for_MOTP}
swi(\Sigma) = \frac{1}{T-1}\sum^{T-1}_{t=1} \mathbbm{I}\{\Sigma(t) \neq \Sigma(t+1)\}
\end{equation}
denote the empirical frequency of identity switches, i.e. association changes, where $\mathbbm{I}\{\cdot\}$ is the indicator function. Let
\begin{equation}\label{eq:dist_for_MOTP}
dist(\Sigma,A,B) = \frac{1}{T}\sum^T_{t=1} \sum^m_{i=1} 
D^{AB}_{i \, \Sigma_i(t)}(t)
\end{equation}
denote the average distance between $A$ and $B$ under $\Sigma = (\Sigma(1),...,\Sigma(T))$.
Note that, for example, $\mathcal{D}_{MOTP} = T \times dist(\Sigma_{MOTP},A,B)$.

\begin{lemma}\label{th:MOT_inconsistent}
For any threshold $thr_{MOT} \hspace{-0.1cm}>\hspace{-0.1cm} 0$, there exists trajectory sets $A, B \in \mathcal{S}$ and an association $\Sigma \in \Pi^T$ such that 
\begin{align*}
& dist(\Sigma,A,B) < \mathcal{O}(1/T),\\
& swi(\Sigma) =0 \text{ and},\\ 
& dist(\Sigma_{\text{MOTP}},A,B) > m\,thr - \mathcal{O}(1/T).
\end{align*}

\end{lemma}

The following lemma shows that, for any $thr_{MOT} > 0$,
the MOTP measure is mathematically inconsistent.
\begin{lemma}\label{th:MOPT_no_metric}
MOTP is not a metric for any threshold $thr_{MOT} \hspace{-0.1cm}>\hspace{-0.1cm} 0$.
\end{lemma}
The proofs of these two lemmas are in Appendix
\ref{app:proof_of_th_mota_bad_association}.

It is also possible to prove that MOTA, the other widely used measure in
the CLEAR MOT, does not define a metric. See 
\url{www.jbento.info/papers/metriccompanion.pdf}.

%
%
%
%

\section{Properties of our metrics} \label{sec:main_results}

The metrics we introduce do not produce the counter-intuitive results of
OSPA-ST or MOTP because they control the tradeoff between switches
and distances in a globally optimal way.
In addition, we also show that they are a metric
under mild conditions on $\mathcal{K}$, $\|.\|$ and $d$.

\begin{definition}\label{th:property_of_Sigma}
A map $\mathcal{K}: \Pi^T \mapsto \mathbb{R}^+_0$ is a {\bf permutation measure} if it satisfies the following three properties
(i) $\mathcal{K}(\Sigma) = 0$ if and only if $\Sigma$ is
constant $\Sigma = (\sigma,\sigma,...,\sigma)$ for some $\sigma \in \Pi$, (ii) $\mathcal{K}(\Sigma^{-1}) = \mathcal{K}(\Sigma)$ and (iii) $\mathcal{K}(\Sigma \circ \Sigma') \leq \mathcal{K}(\Sigma) + \mathcal{K}(\Sigma')$. 
\end{definition}

A few observations are in order. Clearly, if $\mathcal{K}$ is a permutation measure so is $\alpha \mathcal{K}, \alpha > 0$. Also, property (iii) requires
that all permutation matrices have the same dimensions, otherwise we cannot
talk about their composition.  In practice, this can be naturally achieved if e.g. all trajectories have a common maximum observation time $T$, sets of trajectories have a common maximum number of trajectories $M$, and permutation matrices have dimensions $2M\times2M$. Finally, Def. \ref{th:property_of_Sigma} implies that $\mathcal{K}$ is invariant under reindexing. Indeed,  by property (iii) followed by property (i) $\mathcal{K}( \Sigma \circ (\sigma, \sigma, \dots, \sigma)) \leq \mathcal{K}( \Sigma ) + 	\mathcal{K}( (\sigma, \sigma, \dots, \sigma)) = \mathcal{K}( \Sigma )$. On the other hand, by property (iii) followed by property (i) followed by property (ii) $\mathcal{K}( \Sigma ) = \mathcal{K}( \Sigma \circ (\sigma, \sigma, \dots, \sigma) \circ 	(\sigma, \sigma, \dots, \sigma)^{-1}) \leq \mathcal{K}( \Sigma \circ 		(\sigma, 	\sigma, \dots, \sigma) ) + \mathcal{K}( (\sigma, \sigma, \dots, \sigma)^{-1}) = 		\mathcal{K}( \Sigma \circ (\sigma, \sigma, \dots, \sigma))$. Therefore, $\mathcal{K}( \Sigma \circ (\sigma, \sigma, \dots, \sigma)) = \mathcal{K}( \Sigma )$. A similar argument shows that $\mathcal{K}(  (\sigma, \sigma, \dots, \sigma)\circ\Sigma ) = \mathcal{K}( \Sigma )$. This invariance is part of the reason why we can later show that, under certain conditions, $\mathcal{D}_{{nat}}$ and $\mathcal{D}_{{comp}}$ are metrics.

One straightforward choice for $\mathcal{K}$ is to count the number of times that the association between $A$ and $B$ changes.
\begin{theorem}\label{th:K_count}
Let $
\mathcal{K}_{count}(\Sigma) = \sum^{T-1}_{t=1} \mathbb{I}(\Sigma(t+1) \neq \Sigma(t) )$.
$\mathcal{K}_{count}$ is a permutation measure.
\end{theorem}

Another two desirable choices for $\mathcal{K}$ are (a) the function that sums the minimum number of transpositions
to go from one permutation to the next and (b) the function
that sums the number of adjacent transpositions to go
from one permutation to the next.
In what follows $k_{Cayley}(\sigma)$ gives the minimum number of transpositions
to obtain the identity permutation from $\sigma \in \Pi$ and $k_{Kendall}(\sigma)$ gives the number of adjacent transposition that the
Bubble-Sort algorithm performs when sorting $\sigma$ to obtain the identity permutation. The
Cayley distance goes back to \cite{cayley1849lxxvii} and the Kendall distance to\cite{kendall1938new}.

\begin{theorem}\label{th:K_trans}
Let $\mathcal{K}_{trans}(\Sigma) = \sum^{T-1}_{t=1} k_{Cayley}(\Sigma(t+1)\circ\Sigma(t)^{-1})$.  $\mathcal{K}_{trans}$ is a permutation measure.
\end{theorem}

The proof of Theorems \ref{th:K_count} and \ref{th:K_trans} is in Appendix \ref{app:proof_for_K_propreties}.

Although many intuitive choices for $\mathcal{K}$ satisfy
properties (i), (ii) and (iii) of Definition \ref{th:property_of_Sigma}, some natural ones do not. For example, given a $\beta \geq 1$ we might want to define
$\mathcal{K}_{maxcount}$ as
$\mathcal{K}_{maxcount}(\Sigma) = \mathcal{K}_{count}(\Sigma)$ if $\mathcal{K}_{count}(\Sigma) \leq \beta$ and 
$\mathcal{K}_{maxcount}(\Sigma) = \infty$ if $\mathcal{K}_{count}(\Sigma) > \beta$ (we can replace $\infty$ by some very large number if we want to be technical about the range of $\mathcal{K}$ being $\mathbb{R}^+_0$).
This $\mathcal{K}$ forces us not to create an association between $A$ and $B$ more intricate than a certain amount $\beta$, something natural to desire.
The following is proved in Appendix \ref{app:necessary_conditions_for_D_nat_metric}.
\begin{theorem} \label{th:K_maxcount_not_proper}
$\mathcal{K}_{maxcount}$ does not satisfy (iii) in Definition \ref{th:property_of_Sigma}. Thus, it is not a permutation measure.
\end{theorem}

The following theorem is similar to Theorem \ref{th:K_maxcount_not_proper} and its proof is omitted.
\begin{theorem}\label{th:K_adjtrans}
Let $\mathcal{K}_{adjtrans}(\Sigma) =  \sum^{T-1}_{t=1} k_{Kendall}(\Sigma(t+1)\circ\Sigma(t)^{-1})$.  $\mathcal{K}_{adjtrans}$ does not satisfy (iii) in Definition \ref{th:property_of_Sigma}. Thus, it is not a permutation measure.\end{theorem}

We now state our first main result.
\begin{theorem} \label{th:Dnaturalismetric}
If $\mathcal{K}$ is a permutation measure and $d$ is a metric, then
the map $\mathcal{D}_{nat}$ induced by them is a metric on $\mathcal{S}$.
\end{theorem}

Even if a function $\mathcal{K}$ violates some of the properties in Definition \ref{th:property_of_Sigma}, it could still induce a $\mathcal{D}_{nat}$ that is a metric. However,
we often find that, from a set of associations
$\Sigma$ and $\Sigma'$ that violate Definition \ref{th:property_of_Sigma},
we can build three sets of trajectories $A$, $B$ and $C$
that violate some of the properties required of a metric. This is the case, for example, for $\mathcal{D}_{nat}$ induced by $\mathcal{K}_{maxcount}$
and $d$ equal to the Euclidean metric.
\begin{theorem} \label{th:K_maxcount_leads_to_not_metric}
The map $\mathcal{D}_{nat}$, induced by $\mathcal{K} = \mathcal{K}_{maxcount}$ and the Euclidean distance $d$, is not a metric.
\end{theorem}
The proof of this theorem is in Appendix \ref{app:necessary_conditions_for_D_nat_metric}.
In this sense, Definition \ref{th:property_of_Sigma} is required for $\mathcal{D}_{nat}$ to be a family of metrics.

We now focus on $\mathcal{D}_{comp}$.
\begin{definition}\label{th:property_of_matrix_norm}
A matrix norm $\|.\|$ is a {\bf switching norm} if 
for any four matrices $w_1,w_2,w'_1,w'_2 \in \mathcal{P}$
\begin{equation}\label{eq:propofnormforDcomp}
\|w'_2w_2 - w'_1w_1\| \leq \|w'_2 - w'_1\| + \|w_2 - w_1\|.
\end{equation}
\end{definition}

The following Lemma gives sufficient conditions for a metric $\|.\|$
to satisfy property \eqref{eq:propofnormforDcomp}. See Appendix \ref{app:proof_that_norms_for_Dcomp_are_many} for the proof.
\begin{lemma} \label{th:many_norms_satisfy_D_comp}
If $\|.\|$ is a sub-multiplicative norm and $\|W\| \leq 1$ for all
$W \in \mathcal{P}$ then $\|.\|$ is a switching norm.
\end{lemma}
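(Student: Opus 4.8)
The plan is to verify property \eqref{eq:propofnormforDcomp} directly, by an ``add and subtract a cross term'' argument that uses only the three hypotheses of the lemma: that $\|\cdot\|$ is a norm (so it is homogeneous and obeys the triangle inequality), that it is sub-multiplicative, and that $\|W\| \leq 1$ for every $W \in \mathcal{P}$. No structural property of doubly stochastic matrices beyond this last bound is required, so the same argument would apply to any class of matrices on which the norm is bounded by $1$.

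Concretely, I would first insert the cross term $Y_1 X_2$ and use the algebraic identity
$$Y_2 X_2 - Y_1 X_1 = (Y_2 - Y_1)\,X_2 + Y_1\,(X_2 - X_1),$$
which holds for arbitrary matrices. Applying the triangle inequality to the right-hand side gives $\|Y_2 X_2 - Y_1 X_1\| \leq \|(Y_2 - Y_1) X_2\| + \|Y_1 (X_2 - X_1)\|$. I would then invoke sub-multiplicativity on each of the two products to obtain
$$\|Y_2 X_2 - Y_1 X_1\| \leq \|Y_2 - Y_1\|\,\|X_2\| + \|Y_1\|\,\|X_2 - X_1\|.$$
Finally, since $X_2, Y_1 \in \mathcal{P}$, the hypothesis $\|W\| \leq 1$ yields $\|X_2\| \leq 1$ and $\|Y_1\| \leq 1$; substituting these two bounds collapses the right-hand side to exactly $\|Y_2 - Y_1\| + \|X_2 - X_1\|$, which is \eqref{eq:propofnormforDcomp}.

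I do not expect a genuine obstacle: once the cross term is chosen the estimate is mechanical. The only points deserving care are bookkeeping ones. First, one must make sure the two factors whose norm is bounded by $1$, namely $X_2$ and $Y_1$, are the ones that actually lie in $\mathcal{P}$ — which they do by assumption. Second, it is worth recording that the symmetric choice of cross term $Y_2 X_1$ works equally well, requiring instead $\|Y_2\| \leq 1$ and $\|X_1\| \leq 1$; both decompositions are legitimate, confirming that the bound does not rely on any asymmetric treatment of the four matrices. This last observation also makes transparent why \eqref{eq:propofnormforDcomp} is precisely the matrix analogue of the sub-additivity property (iii) imposed on $\mathcal{K}$ in Definition \ref{th:property_of_Sigma}, and hence exactly what is needed to telescope the first term of $\mathcal{D}_{comp}$ when one sets $W^{AC}(t) = W^{AB}(t)\,W^{BC}(t)$ in the proof that $\mathcal{D}_{comp}$ satisfies the triangle inequality.
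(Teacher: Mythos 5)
Your proof is correct and is essentially the paper's own argument: the paper inserts the cross term $Y_2X_1$ (writing $Y_2X_2 - Y_1X_1 = Y_2(X_2-X_1) + (Y_2-Y_1)X_1$ and bounding $\|Y_2\|,\|X_1\| \leq 1$), while you insert the mirror-image cross term $Y_1X_2$, a symmetric variant you yourself note is interchangeable. In both cases the estimate is one triangle inequality, two applications of sub-multiplicativity, and two applications of the bound $\|W\| \leq 1$ for $W \in \mathcal{P}$, so the two proofs coincide in substance.
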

This lemma implies, for example, that the 1-norm, $\infty$-norm and 
spectral norm for matrices are valid choices for $\|.\|$.

We now state our second most important result.
\begin{theorem} \label{th:D_comp_is_metric}
If $\|.\|$ is a switching norm and $d$ is a metric then the
 map $\mathcal{D}_{comp}$ induced by $\|.\|$ and $d$ is
 a metric on $\mathcal{S}$.
\end{theorem}
The proof of this theorem is in Appendix \ref{app:proof_that_D_comp_is_metric}.

The use of $\|.\|_1$ (matrix norm) in $\mathcal{D}_{comp}$ is extremely useful
because it induces the changes of association to be sparse
and, as the next theorem shows, reduces $\mathcal{D}_{comp}$ to solving a linear program. Recall that all LPs can be solved in
polynomial time \cite{khachiian1979polynomial}. The theorem's proof is in Appendix \ref{app:proof_that_D_comp_equals_LP}.
\begin{theorem} \label{th:dcompisLP}
For any $A, B \in \mathcal{S}$, the metric $\mathcal{D}_{comp}(A,B)$
induced by the matrix $1$-norm and any $d$ can be computed (in polynomial time) by solving a linear program.
\end{theorem}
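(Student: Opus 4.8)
The plan is to exhibit an explicit linear program whose optimal value equals $\mathcal{D}_{comp}(A,B)$ and whose size is polynomial in $m$ and $T$, and then invoke the polynomial-time solvability of linear programs. Recall that the matrix $1$-norm is the induced (operator) norm given by the maximum absolute column sum, $\|X\|_1 = \max_{1\le j\le m}\sum_{i=1}^m |X_{ij}|$; this is precisely the choice validated by Lemma \ref{th:many_norms_satisfy_D_comp}, since its maximum column sum over the doubly stochastic matrices is $1$. Three features of \eqref{eq:defDcomp} make it amenable to linearization. First, the feasible set $\mathcal{P}^T$ is a polytope: each $W(t)$ ranges over the Birkhoff polytope, cut out by the linear constraints $\sum_i W(t)_{ij}=1$, $\sum_j W(t)_{ij}=1$ and $0\le W(t)_{ij}\le 1$. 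Second, the data term is linear in the entries of the $W(t)$, because $\operatorname{tr}(W(t)^{\dagger}D^{AB}(t)) = \sum_{i,j} W(t)_{ij}\,(D^{AB}(t))_{ij}$ with the matrices $D^{AB}(t)$ fixed. Third, the remaining term is a sum of matrix $1$-norms, which I linearize below.

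First I would introduce, for each $t\in\{1,\dots,T-1\}$, an auxiliary matrix $Y(t)\in\mathbb{R}^{m\times m}$ and a scalar $u(t)\in\mathbb{R}$, and replace $\|W(t+1)-W(t)\|_1$ by $u(t)$ subject to the linear constraints
\begin{align}\label{eq:lplineariz}
Y(t)_{ij} &\ge W(t+1)_{ij}-W(t)_{ij}, \quad Y(t)_{ij}\ge W(t)_{ij}-W(t+1)_{ij}, \nonumber\\
u(t) &\ge \textstyle\sum_{i=1}^m Y(t)_{ij}\ \ \text{for every column } j.
\end{align}
The first two families of inequalities force $Y(t)_{ij}\ge |W(t+1)_{ij}-W(t)_{ij}|$, and the last forces $u(t)$ to be at least every absolute column sum of $W(t+1)-W(t)$. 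The resulting program minimizes $\sum_{t=1}^{T-1}u(t)+\sum_{t=1}^{T}\sum_{i,j}W(t)_{ij}(D^{AB}(t))_{ij}$ over the $W(t)\in\mathcal{P}$, the $Y(t)$ and the $u(t)$, subject to the Birkhoff constraints and \eqref{eq:lplineariz}. Every constraint and the objective are linear, so this is a linear program.

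Next I would verify that this reformulation is tight, that is, that its optimum equals $\mathcal{D}_{comp}(A,B)$. For any fixed feasible choice of the $W(t)$, the variables $Y(t)$ and $u(t)$ appear only in \eqref{eq:lplineariz} and, with positive sign, in the objective; hence minimizing over them alone drives each $Y(t)_{ij}$ down to $|W(t+1)_{ij}-W(t)_{ij}|$ and then $u(t)$ down to $\max_j\sum_i Y(t)_{ij}=\|W(t+1)-W(t)\|_1$. Substituting these optimal inner values recovers exactly the objective of \eqref{eq:defDcomp}, so the two minimizations have the same value. A count then shows the LP has $O(Tm^2)$ variables and $O(Tm^2)$ constraints, which is polynomial in the size of the input $A,B$. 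Invoking the fact that linear programs are solvable in polynomial time \cite{khachiian1979polynomial} gives the claim.

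The one point that needs genuine care, rather than bookkeeping, is the tightness argument combined with the correct LP encoding of the operator $1$-norm: the maximum over columns of an absolute column sum requires two layers of auxiliary variables, the matrix $Y(t)$ to eliminate the absolute values and the single scalar $u(t)$ to take the maximum, and one must check that the outer minimization interacts with these inner variables only through the intended penalty. Everything else, namely the linearity of the trace term, the polyhedral description of $\mathcal{P}$, and the polynomial size bound, is routine.
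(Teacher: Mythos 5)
Your proposal is correct and follows essentially the same route as the paper's proof: an epigraph reformulation of the induced matrix $1$-norm (maximum absolute column sum) using one auxiliary matrix per time step to eliminate absolute values and one scalar per time step to take the column maximum, combined with the observations that $\mathcal{P}$ is cut out by linear constraints and the trace term is linear in the $W(t)$. Your explicit tightness check and the $O(Tm^2)$ size count are slightly more detailed than the paper's write-up, but they formalize exactly the same argument.
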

This LP can be a made sparse if we impose that for every $(i,j,t)$ such that $A_i(t)$ and $B_j(t)$ are distant we must have $W_{ij}(t) = 0$, i.e.
we cannot associate distant points.
Sparsity allows us to reduce the effective number of
optimization variables in \eqref{eq:defDcomp} and speedup computations further.

To end this section, let us explicitly include a scaling factor $\alpha > 0$
in the definition of $\mathcal{D}_{comp}$. To be specific,
\begin{equation}\label{eq:Dcompwithscaling}
\mathcal{D}_{comp}(A,B) = \min_{W \in \mathcal{P}^T}\;\;\alpha\times swi(W) + dist(W,A,B),
\end{equation}
where, changing the definition of \eqref{eq:swi_for_MOTP} and \eqref{eq:dist_for_MOTP} in Section \ref{sec:limitations_MOT},
\begin{align}
swi(W) &= \sum^{T-1}_{t=1}\|W(t+1)-W(t)\| \text{ and }\label{eq:swi_for_Dcomp}\\
dist(W,A,B) &= \sum^T_{t=1} \text{\bf tr}(W(t)^{\dagger} D^{AB}(t))\label{eq:dist_for_Dcomp}.
\end{align}

If we compute 
$\mathcal{D}_{comp}$ for different $\alpha$'s we obtain different pairs of
values $dist$ and $swi$. We can obtain more pairs of values if
we compute $dist$ and $swi$ using as
 $W$ the permutation matrices representing
 the association produced by the CLEAR MOT or OSPA-ST. Even
using MOTP alone, we can compute different values
for $swi$ and $dist$ by changing $thr_{MOT}$.

If we fix $A$ and $B$, we can display all these different pairs on a 2D scatter plot where each point is a pair $(dist,swi)$ evaluated on a different $W \in \mathcal{P}^T$. Using such a scatter plot is a great way to assess the performance of a tracker $B$
on ground-truth $A$ under different similarity measures or when
we do not know which $\alpha$ or $thr_{MOT}$ to use
to compute $\mathcal{D}_{comp}$ and $\mathcal{D}_{MOTP}$.
For a fixed $A$ and $B$, a good measure $\mathcal{D}$
generates pairs closer to the origin.
A very good measure $\mathcal{D}$ generates pairs only on the Pareto frontier of this scatter plot. 
Not surprisingly, the pairs $(dist,swi)$ that $\mathcal{D}_{comp}$
produces for different values of $\alpha$ generate this Pareto frontier. 
This follows from a well-known result in convex optimization theory that we state in Theorem \ref{th:optimal_tradeoff} (see \cite{boyd2004convex} for a proof).
In this context, $\mathcal{D}_{comp}$ is the best metric we can hope.
In particular, none of the examples where MOTP or OSPA-ST
produce counter-intuitive results, like the counter example behind the proof of
Lemma \ref{th:MOT_inconsistent}, lead to $\mathcal{D}_{comp}$ giving counter-intuitive results.
The reader unfamiliar with convex optimization can jump to Section \ref{sec:num_res_run_time}.

Let $\Omega$ be a convex set, e.g. $\mathcal{P}^T$, and let
$f$ and $g$ be two convex functions in $\Omega$, e.g. $swi(.)$ and $dist(.,A,B)$. Let $\mathcal{R} = \{(s,t) \in \mathbb{R}^2: s \geq f(x) \text{ and } t \geq g(x), x \in \Omega \}$. This set could be, for example, the points in our scatter plot plus points with worse costs. Let $\partial \mathcal{R}$ be the boundary of $\mathcal{R}$. Since $\mathcal{R}$ is convex, $\partial \mathcal{R}$ is its Pareto frontier. For instance, $\partial \mathcal{R}$ could be the Pareto frontier we discussed above.
Let $p(\alpha) \in \mathbb{R}^2$ be the curve of $(f,g)$ pairs generated
by solving $\min_{x\in \Omega } f(x) + \alpha g(x) $ for different values of $\alpha > 0$. This could be a tradeoff curve of $(swi,dist)$ pairs generated by $\mathcal{D}_{comp}$ for different $\alpha$'s.
\begin{theorem} \label{th:optimal_tradeoff}
If $(s, t) \in \partial \mathcal{R}$ then either $(s,t) = p(\alpha_0)$
for some $\alpha_0 > 0$ or $(s,t)$ is a convex combination of
$p(\alpha_0)$ and $p(\alpha_1)$ for some $\alpha_0,\alpha_1 > 0$.
\end{theorem}
%

%
%
%
%
%

\section{Numerical results: $\mathcal{D}_{comp}$ computation time}
\label{sec:num_res_run_time}

In practice, there are many ways to compute
$\mathcal{D}_{comp}$, like when solving a convex optimization problem. 
To illustrate how easy it is to get code working, we have a simple Matlab code for
$\mathcal{D}_{comp}$ in \url{www.jbento.info/papers/metriccompanion.pdf}. In the same document,
we also have a simple Matlab code to estimate $\mathcal{D}_{nat}$.
However, to explore the limits of practical performance,
we coded a non-trivial solver in C using $parADMM$, an
implementation of the Alternating Direction Method of Multipliers (ADMM) 
that is available at \url{github.com/parADMM/engine}. $parADMM$ has
good performance in practice \cite{hao2016testing}.
The ADMM is known to scale well, and its modular nature makes it easy to research future variants of $\mathcal{D}_{comp}$ without having to re-write much code. Furthermore, for strongly convex problems, its optimally-tuned convergence rate is as fast as that of the fastest first-order method \cite{francca2016explicit}.

\cite{boyd2011distributed} is a good introduction to the ADMM and its applications.

In Figure \ref{fig:profilingcodeforDcomp} we plot run-time in computing $\D_{comp}$ against the total duration $T$ of the input data for a different number of free association variables per time instant $t$. We use the term ``free association variables''
because, as explained after Thm. \ref{th:dcompisLP}, a few variables $W_{ij}(t)$ can be set to zero to sparsify the problem and save computation time.
We choose the euclidean distance for $d$ and the one-norm for matrix norm $\|.\|$ in the switching cost. Similar run-times hold for other metrics. The run-time is computed for randomly generated $A$ and $B$ on a single core of a $1.4$GHz Intel Core i5 MacBook Air. ADMM always converged to $1$\% accuracy in less than $150$ iterations.
\begin{figure}[b]
\vspace{-0.5cm}
\begin{center}
\includegraphics[trim=0.8cm 0.4cm 0.6cm 0cm, clip=true,height=3.8cm]{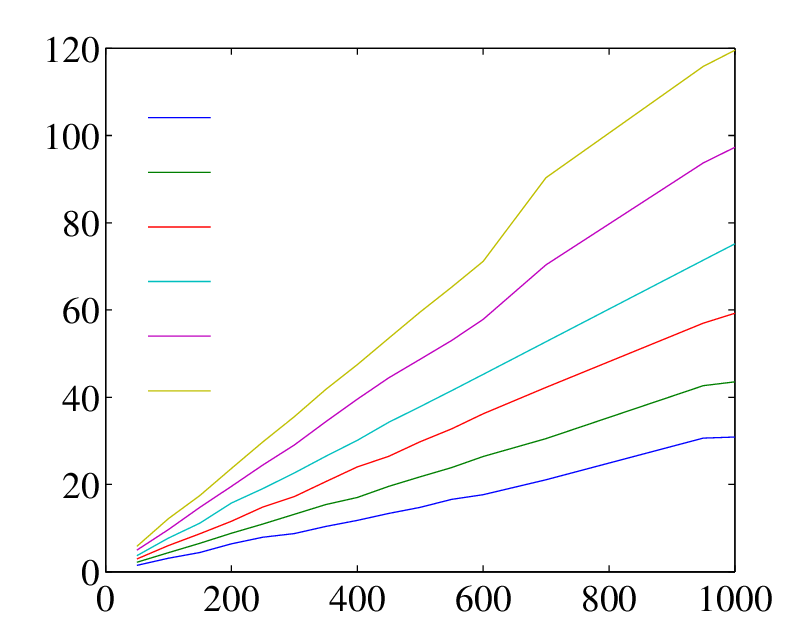}\;\;
\put(-70,110){\rotatebox{0}{ (a)}}
\put(-07.5,23){\rotatebox{90}{\small Run-time (secs)}}
\put(-105,-06){\rotatebox{0}{\small Max. trajectory length, $T$}}
\put(-115,94){\rotatebox{0}{\small \# free vars. per instant}}
\put(-95,85){\rotatebox{0}{\small $2.5 \times 10^3$}}
\put(-95,76){\rotatebox{0}{\small $3.6 \times 10^3$}}
\put(-95,67){\rotatebox{0}{\small   $4.9 \times 10^3$}}
\put(-95,57){\rotatebox{0}{\small   $6.4 \times 10^3$}}
\put(-95,48){\rotatebox{0}{\small   $8.1 \times 10^3$}}
\put(-95,38){\rotatebox{0}{\small   $10 \times 10^3$}}
\includegraphics[trim=0.8cm 0.4cm 0.8cm 0cm, clip=true,height=3.8cm]{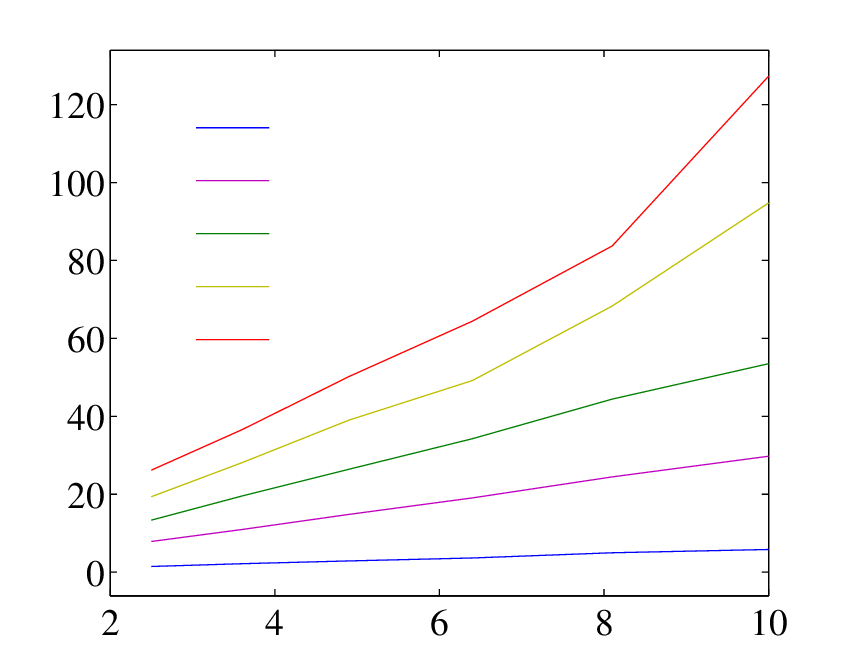}
\put(-70,110){\rotatebox{0}{ (b)}}
\put(-110,94){\rotatebox{0}{\small Max. trajectory length, $T$}}
\put(-85,85){\rotatebox{0}{\small $50$}}
\put(-85,76){\rotatebox{0}{\small $250$}}
\put(-85,67){\rotatebox{0}{\small   $450$}}
\put(-85,57){\rotatebox{0}{\small   $650$}}
\put(-85,48){\rotatebox{0}{\small   $850$}}
\put(-115,-06){\rotatebox{0}{\small \# free vars. per instant}}
\put(-30,-05){\rotatebox{0}{\small ($\times 10^3$)}}
\caption{Time to compute $\mathcal{D}_{comp}$ (within $1$\% accuracy) as a function of (a) the length of trajectories; and (b) the number of association variables per instant.}
\label{fig:profilingcodeforDcomp}
\vspace{-0.7cm}
\end{center}
\end{figure}

To interpret the plots, imagine we want to evaluate the quality of a tracker when tracking $22$ objects. Imagine that the tracker operates at $30$ frames per second and also that our tracker is noisy so that it produces a few false tracks that create approximately $10$ extra points per instant. To compute $\mathcal{D}_{comp}$, and after we extend the ground-truth and hypothesis sets from $A$ and $B$ to $A^+$ and $B^+$, we are dealing with distance matrices $D^{AB}(t)$ with about $((22+10)\times 2)^2 = 4096$ variables per instant $t$. Using Fig.~\ref{fig:profilingcodeforDcomp}
we see that it take us about $40$ seconds to evaluate the accuracy of $800/30 = 26.6$ seconds of data. If we reduce the number of free variables per frame to half, e.g. by setting $W_{ij}(t)=0$ if $D^{AB}_{ij}(t)$ is larger than a given threshold, then we can reduce the time to process $26.6$ secs. of data to about $ 20$ secs.

%
%

\section{Numerical results: Optimal tradeoff curves}
\label{sec:num_res}

To the best of our knowledge,
$\mathcal{D}_{comp}$ is the first measure that
is a metric, can be computed in
polynomial time and, in addition, deals with the tradeoff between
distance cost and switching cost optimally. In this sense,
 $\mathcal{D}_{comp}$ (and future variants) is the best metric we can hope for (cf.
Section \ref{sec:main_results}).

To illustrate this optimal tradeoff, we build and compare tradeoff curves obtained from $\mathcal{D}_{comp}$ and MOTP. A \emph{tradeoff curve} is a set of points $(swi,dist)$ where $swi$ and $dist$ are computed using \eqref{eq:swi_for_Dcomp} and \eqref{eq:dist_for_Dcomp}.
For $\mathcal{D}_{comp}$,
we obtain the different points along the curve by changing $\alpha > 0$
in \eqref{eq:Dcompwithscaling}.
For MOTP we generate the tradeoff curve by changing $thr_{MOT}$.
We do this for both synthetic and real data.
We use the Euclidean metric for $d^+$ and the component-wise
$1$-norm for the matrix norm $\|.\|$. 

The direct interpretation of our results is that $\mathcal{D}_{comp}$ is better than MOTP. However, the important underlying fact is that $\mathcal{D}_{comp}$ builds better associations than (i) the heuristics widely used in the literature, e.g. the CLEAR MOT, and (ii) the optimal associations before our work that do not allow switches, e.g. OSPA-ST. Therefore, although we restrict the comparison to MOTP, we can show similar improvement over many measures that first establish a heuristic association between $A$ and $B$. For example, Multiple Object Tracking Accuracy (MOTA), False Alarms per Frame, Ratio of Mostly Tracked trajectories, Ratio of Mostly Lost trajectories, the number of False Positives, number of False Negatives, number of ID Switches, the number of tracks Fragmentation and many of the measures listed in \cite{rothrock2000performance} and \cite{gorji2011performance}.

%
%
\vspace{-0.3cm}
\subsection{Real trackers and real data}
\label{sec:towncenternumerics}

In Figure \ref{fig:real_data_results}-(a) we show
the performance of the trackers in \cite{benfold2009guiding} and \cite{benfold2011stable} on the AVG-TownCentre data set. We call these trackers $Tracker09$ and $Tracker11$ respectively. The data set is part of the Multiple Object Tracking Benchmark \cite{motchallenge} and is widely used in computer vision. It comes from a pedestrian street filmed from an elevated point for $3$ minutes and $45$ seconds and can be downloaded from \url{http://www.robots.ox.ac.uk/ActiveVision/Research/Projects/2009bbenfold_headpose/project.html#datasets}.
In Figure \ref{fig:real_data_results}-(b) we show the performance
of the trackers in \cite{yang2012multi} and \cite{poiesi2015tracking}
on the PETS2009 data set, also part of the Multiple Object Tracking Benchmark. We call these trackers $Tracker12$ and $Tracker15$ respectively. Its duration is $1$ minute and $54$ seconds, and it can be downloaded from \url{http://www.cvg.reading.ac.uk/PETS2009/a.html}. 
More exact knowledge of these data sets and trackers is outside the scope of this paper.
The point we want to make is in regard to comparing similarity measures.
Not about comparing trackers' performance on different data sets.
We produced all the plots using the exact
same output that each tracker produced in its respective
paper, thanks to the authors who provided us with their trackers' output. When coding $d^+$, we set $M = 20$ for AVG-TownCenter and $M = 50$ for PETS2009.
\vspace{-0.0cm}
\begin{figure}[htbp] 
\begin{center}
\includegraphics[trim=0.9cm 0.cm 0cm 0cm, clip=true,height=4.0cm]{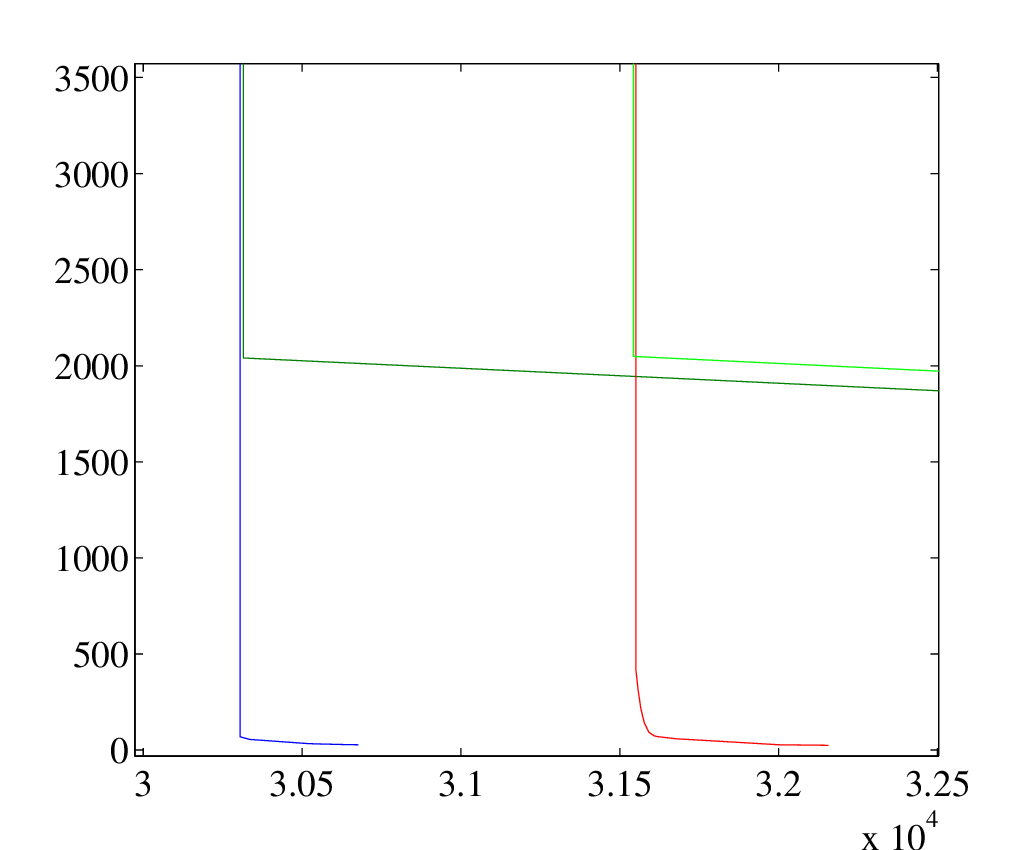}
\put(-70,110){\rotatebox{0}{ (a)}}
\put(-140,35){\rotatebox{90}{\small Switch cost}}
\put(-92,01){\rotatebox{0}{\small Distance cost}}
\put(-105,34){\rotatebox{0}{\small Tracker11}}
\put(-105,24){\rotatebox{0}{\small $\mathcal{D}_{comp}$}}
\put(-52,34){\rotatebox{0}{\small Tracker09}}
\put(-52,24){\rotatebox{0}{\small $\mathcal{D}_{comp}$}}
\put(-105,78){\rotatebox{0}{\small Tracker11}}
\put(-105,68){\rotatebox{0}{\small MOTP}}
\put(-52,78){\rotatebox{0}{\small Tracker09}}
\put(-52,68){\rotatebox{0}{\small MOTP}}
\includegraphics[trim=1.2cm 0.cm 0cm 0cm, clip=true,height=4.0cm]{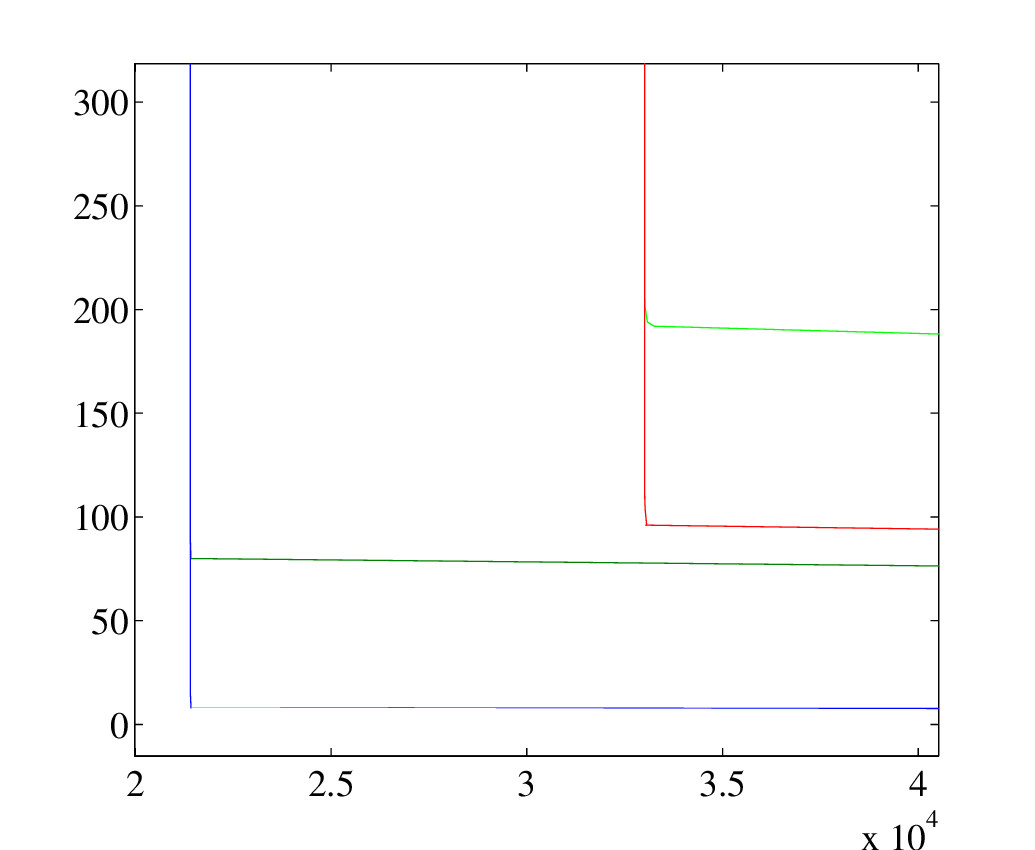}
\put(-70,110){\rotatebox{0}{ (b)}}
\put(-137,35){\rotatebox{90}{\small Switch cost}}
\put(-92,01){\rotatebox{0}{\small Distance cost}}
\put(-105,30){\rotatebox{0}{\small Tracker12}}
\put(-105,20){\rotatebox{0}{\small $\mathcal{D}_{comp}$}}
\put(-52,54){\rotatebox{0}{\small Tracker15}}
\put(-52,44){\rotatebox{0}{\small $\mathcal{D}_{comp}$}}
\put(-105,58){\rotatebox{0}{\small Tracker12}}
\put(-105,48){\rotatebox{0}{\small MOTP}}
\put(-52,82){\rotatebox{0}{\small Tracker15}}
\put(-52,72){\rotatebox{0}{\small MOTP}}
\caption{(a) Tradeoff plot for $Tracker09$ and $Tracker11$ on the AVG-TownCenter data set; and (b) tradeoff plot for $Tracker12$ and $Tracker15$ on the PETS2009 data set. Smaller values (in either axis) is better.}
\vspace{-0.4cm}
\label{fig:real_data_results}
\end{center} 
\end{figure}

As expected,
the tradeoff curves from
MOTP understate the performance of the trackers: all trackers are achieving a substantially smaller number of switches without incurring larger distance costs than what MOTP reports. Interestingly, for these trackers and data sets,
$\mathcal{D}_{comp}$ keeps the same relative ordering of performance as MOTP. It is conceivable that there are situations in which a tracker $1$ is better than a tracker $2$
according to MOTP but not according to $\mathcal{D}_{comp}$.
It would be interesting to find such examples in future work.

%
%
\vspace{-0.3cm}
\subsection{Random ensemble of trajectories}
 
Above, $A$ and $B$ are relatively close
to each other because all the trackers are good trackers.
%
%
In this section, we aim to study $A$ and $B$ that are more different. Hence, we use synthetic data to control the distance between $A$ and $B$. We randomly generate $A$ with $25$ trajectories and make
$B$ a distorted version of the trajectories in $A$. The trajectories in $A$ have random start and end times and the objects in each trajectory randomly change their velocity's direction along the way. The trajectories in $B$ are generated by randomly (i) fragmenting the trajectories in $A$, (ii) removing some of the resulting trajectories, (iii) adding noise to all trajectories and (iv) flipping or not the ID of two trajectories if they pass by each other close enough. In the end, $B$ might have more or less than $25$ trajectories. In total, we have four knobs to increase or reduce the distance between $A$ and $B$. These knobs are, (i) the amplitude of noise, $AMPnoise$; (ii) the probability of fragmenting a track at each point in time, $FRAGprob$; (iii) the probability of deleting a points in the track, $DELprob$; and (iv) the threshold distance after which we allow to tracks ID to be switched or not randomly, $SWIdist$.

Here, trajectories are far more diverse and complex than those in Section \ref{sec:towncenternumerics} and in most publicly available real data sets. Real objects, like people, have relatively simple trajectories.
In addition, we do not just test two data sets, like in Section 
\ref{sec:towncenternumerics}. We generate data for about $20$ different configurations for each of the four knobs described above and for each of these configurations we generate $30$ random sets $A$ and $B$. Hence, and in the context of computer vision tracking, it is as if we test $2400$ different
data sets of ground-truth and output trajectories.

We study the similarity between 
$A$ and $B$ using tradeoff plots with distance cost on the $y$-axis
and switching cost on the $x$-axis.
The smaller the area under the curve (AUC),
the closer $A$ and $B$ are. In Fig. \ref{fig:AUC_evol_synthetic_results} we show the average AUC for $A$ and $B$ under different knob settings. Each AUC is normalized by the largest AUC possible. The largest AUC is the product of the largest distance cost possible with the largest switching cost possible. Each point in the plots is an average over $30$ random pairs $A$ and $B$ with the same knobs' setting. In each plot we keep all but one knob constant.
\vspace{-0.0cm}
\begin{figure}[h!]
\begin{center}
\includegraphics[trim = 10mm 0mm 10mm 0mm, clip, height=4cm]{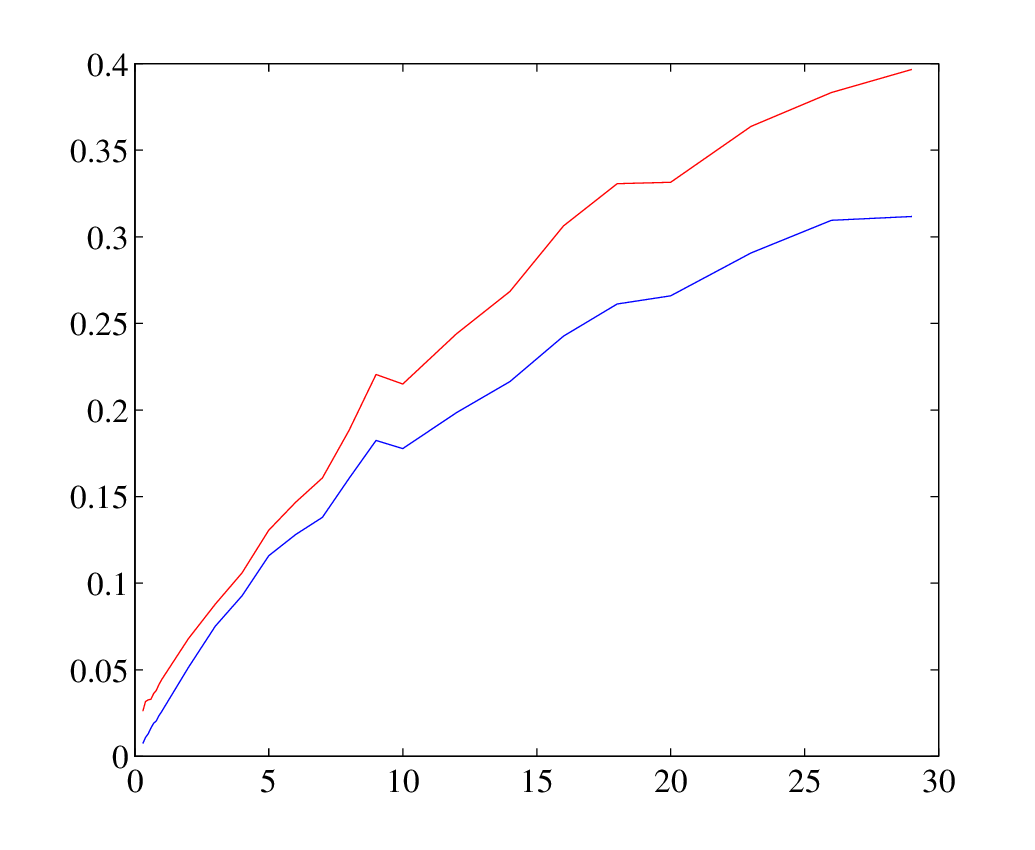}\;
\put(-132, 45){\rotatebox{90}{\small AUC}}
\put(-63  , 90){\small \color{red} MOTP}
\put(-50  , 60){\small \color{blue} $\mathcal{D}_{comp}$}
\put(-65  , 110){\small (a)}
\put(-80  , 0){\small $AMPnoise$}
\includegraphics[trim = 10mm 0mm 10mm 0mm, clip, height=4cm]{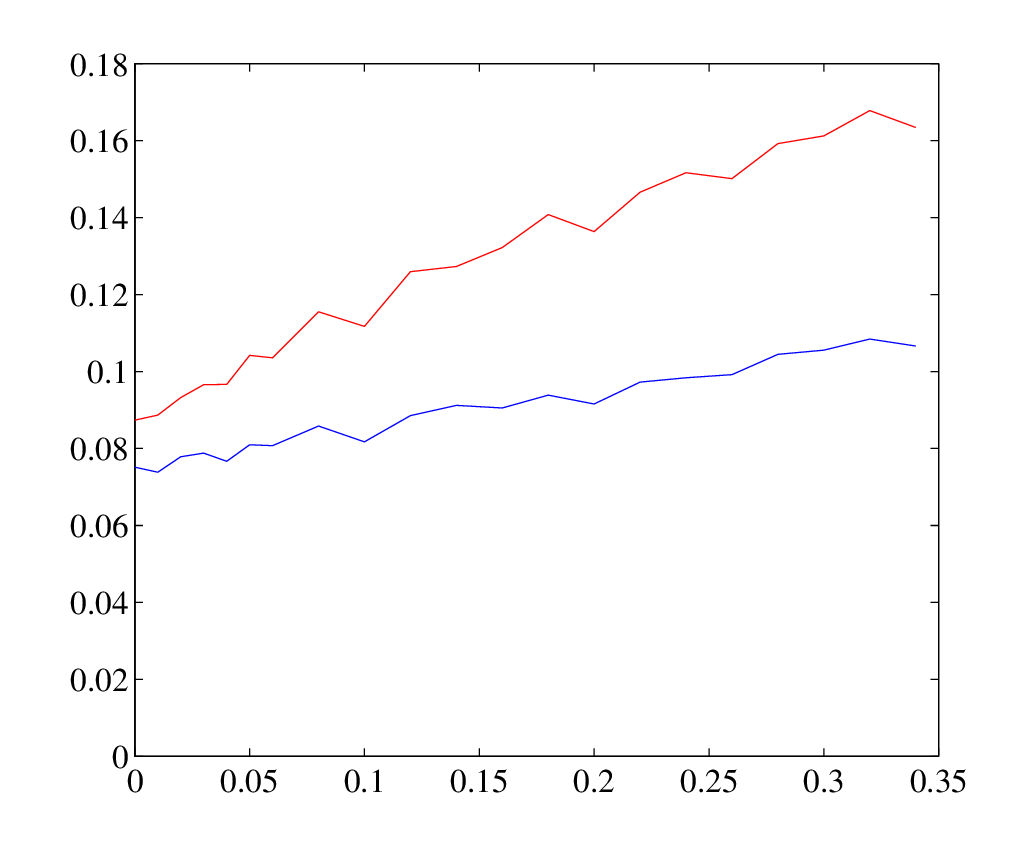}\;
\put(-131, 45){\rotatebox{90}{\small AUC}}
\put(-63  , 90){\small \color{red} MOTP}
\put(-50  , 50){\small \color{blue} $\mathcal{D}_{comp}$}
\put(-65  , 110){\small (b)}
\put(-80  , 0){\small $DELprob$}\\ \vspace{0.2cm}
\includegraphics[trim = 10mm 0mm 10mm 0mm, clip, height=4cm]{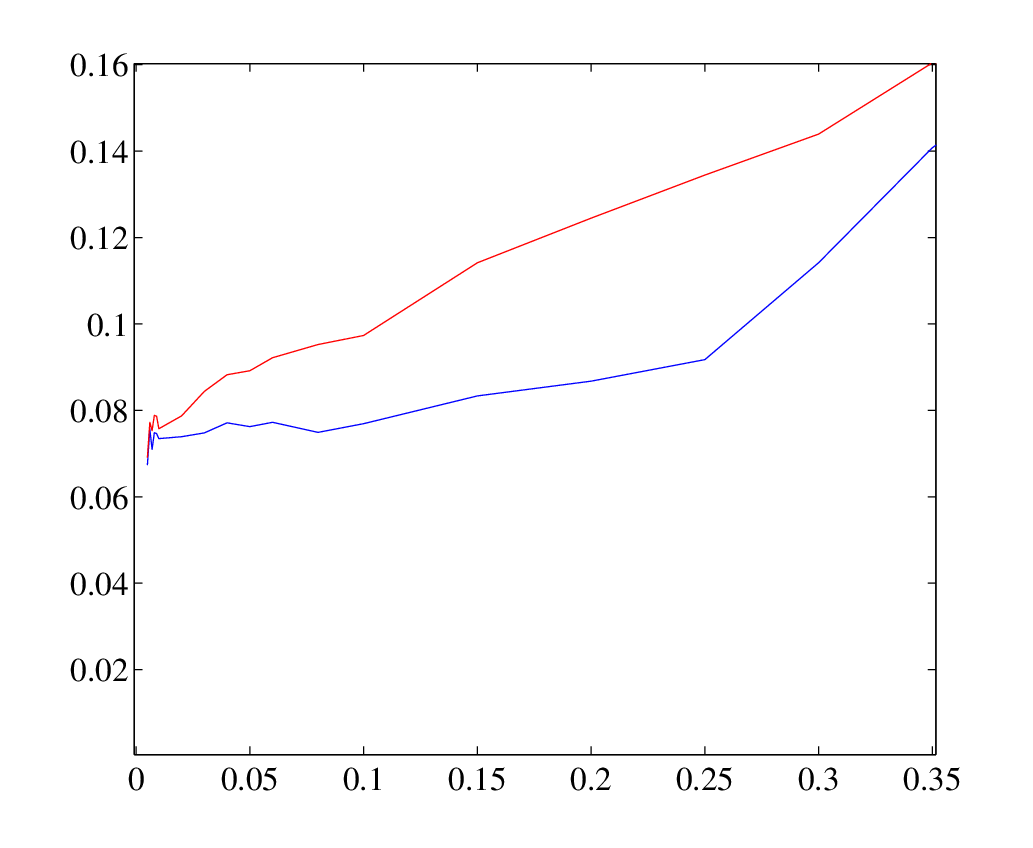}\;
\put(-132, 45){\rotatebox{90}{\small AUC}}
\put(-63  , 90){\small \color{red} MOTP}
\put(-50  , 55){\small \color{blue} $\mathcal{D}_{comp}$}
\put(-65  , 110){\small (c)}
\put(-80  , 0){\small $FRAGprob$}
\includegraphics[trim = 10mm 0mm 10mm 0mm, clip, height=4cm]{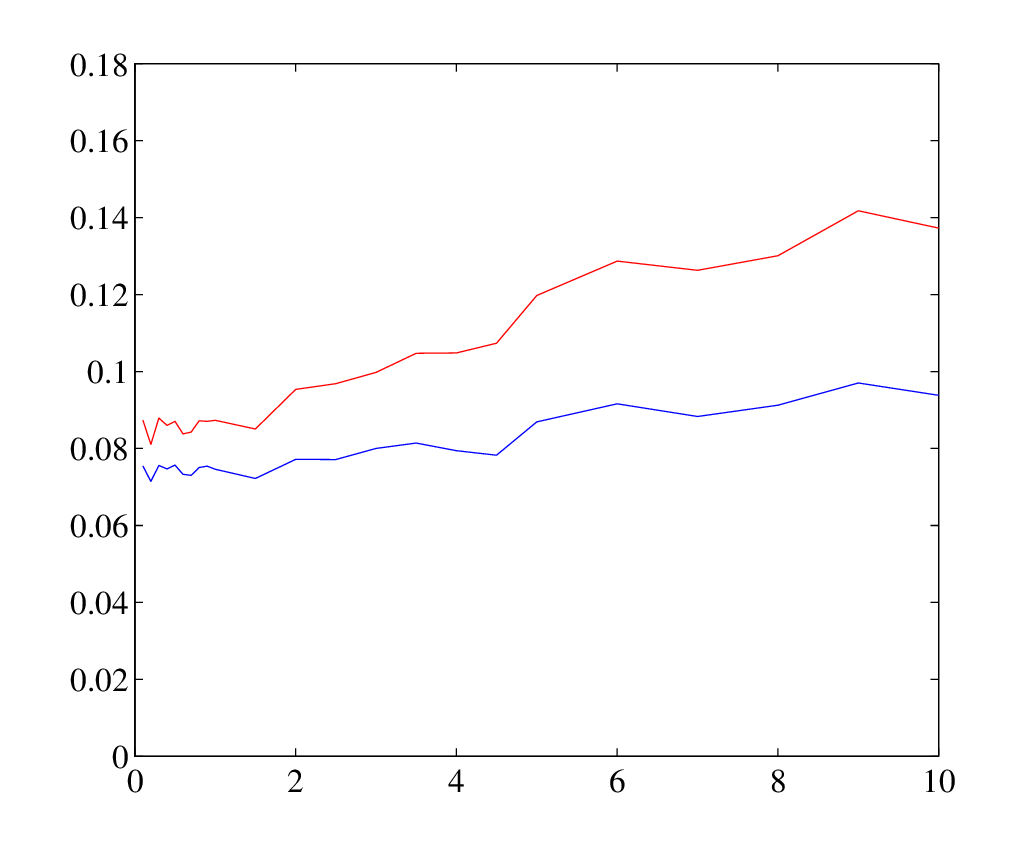}
\put(-129, 45){\rotatebox{90}{\small AUC}}
\put(-43  , 80){\small \color{red} MOTP}
\put(-50  , 48){\small \color{blue} $\mathcal{D}_{comp}$}
\put(-65  , 110){\small (d)}
\put(-80  , 0){\small $SWIdist$}
%
%
%
%
%
\caption{AUC (normalized) versus (a) noise amplitude; (b) point deletion probability; (c) fragmenting probability; and (d) switching distance. Smaller AUC is better.}
\vspace{-0.5cm}
\label{fig:AUC_evol_synthetic_results}
\end{center}
\end{figure}

As expected from Theorem \ref{th:optimal_tradeoff}, the AUC of $\mathcal{D}_{comp}$ is smaller
than the AUC of MOTP. Note that it is incorrect to interpret these results as saying that MOTP is a scaled version of $\mathcal{D}_{comp}$. 
We are computing exactly the same quantities for both measures: $dist$ and $swi$ according to \eqref{eq:dist_for_Dcomp} and \eqref{eq:swi_for_Dcomp}. We just use a different $W$ for each measure.
The curves of Figure \ref{fig:AUC_evol_synthetic_results} have a much deeper significance: they say that the conclusions that hold for real data hold in great generality, now in $2400$ different tests and not just for two real data sets and four tackers as above. If we interpret $A$ and $B$ as the ground-truth and output of a tracker respectively, MOTP almost always says that the tracker is worse than what it really is. $\mathcal{D}_{comp}$ can see similarities between $A$ and $B$ that MOTP cannot.

%
%
\vspace{-0.3cm}
\section{Conclusion}

The problem of defining a similarity measure for sets of trajectories is crucial for computer vision, machine learning, and general AI.
An essential aspect of this problem is finding a
meaningful association between the elements of the sets.
Existing measures that define useful associations fail
to be a metric mathematically speaking, e.g., the CLEAR MOT.
The ones that are a metric,
only consider restrictive associations, i.e. associations that cannot change in time, e.g. OSPA-based metrics.
$\mathcal{D}_{comp}$ is the first that 
simultaneously (1) is a metric, (2) allows time-dependent associations and hence can associate parts of trajectories
to parts of trajectories, (3) allows controlling the complexity of
this association in a globally optimal way and (4) has polynomial computation time.

The general idea of defining mathematical metrics for sets of trajectories using
convex programs is our greatest overarching contribution. We are currently
exploring variants of our metric that allow incorporating uncertainty,
as well as richer comparisons between $A$ and $B$ without losing
its useful properties. We are also exploring the use of this metric in a machine learning application for fast classification and retrieval of human actions/activities. 
We will present these extensions in future work.

\vspace{-0.3cm}
\bibliographystyle{IEEEtran}
\bibliography{tracking_metric} 

%
%
\appendices
%
%

\section{Theory on the limitations of the CLEAR MOT}\label{app:proof_of_th_mota_bad_association}

Let us formally describe the heuristic used by the CLEAR MOT.
\begin{definition}\label{def:MOT_association}
The CLEAR MOT matching heuristic defines $\Sigma_{\text{MOT}}$ sequentially as follows.
\begin{enumerate}
\item Initialize $\Sigma_{\text{MOT}}(1)$ such that
$\sum_{i} d^+(A^+_i(1),B^+_{\Sigma_{\text{MOT}_i}(1)}(1))$
is minimal;
\item For each $t>1$ do: for all $i,j\in \{1,...,m\}$ such that
$\Sigma_{\text{MOT}_i}(t-1)=j$ and
$d^+(A^+_i(t),B^+_{j}(t)) < thr_{MOT}$ fix
$\Sigma_{\text{MOT}_i}(t)=j$. We call such matches as \emph{\bf anchored}. 
Set the non-fixed components of $\Sigma_{\text{MOT}}(t)$ such that
$\sum_{i} d^+(A^+_i(t),B^+_{\Sigma_{\text{MOT}_i}(t)}(t))$ is
minimal.
\end{enumerate}
\end{definition}

\begin{proof}[Proof of Lemma \ref{th:MOT_inconsistent}]
We construct a validating example for any $1 < thr_{MOT} < 2$, with $A = \{A_1,A_2\}$ and $B=\{B_1,B_2\}$, i.e. $m=2$, and where $A_i$ and $B_i$ are 1D trajectories. We generalize this example to any $thr_{MOT}$ and $m$ at the end.

Consider the two sets of one-dimensional trajectories $A = \{A_1,A_2 \}$ and $B = \{B_1, B_2\}$ defined in Figure \ref{fig:clearmot_bad_metric}. Time is on the $x$-axis (left to right) and space on the $y$-axis (bottom to top). $T_1$ and $T_2$
are fixed. $T_3$ grows with $T$.
%
%
%
\begin{figure}[h!]
	\begin{center}
			\includegraphics[height=5cm]{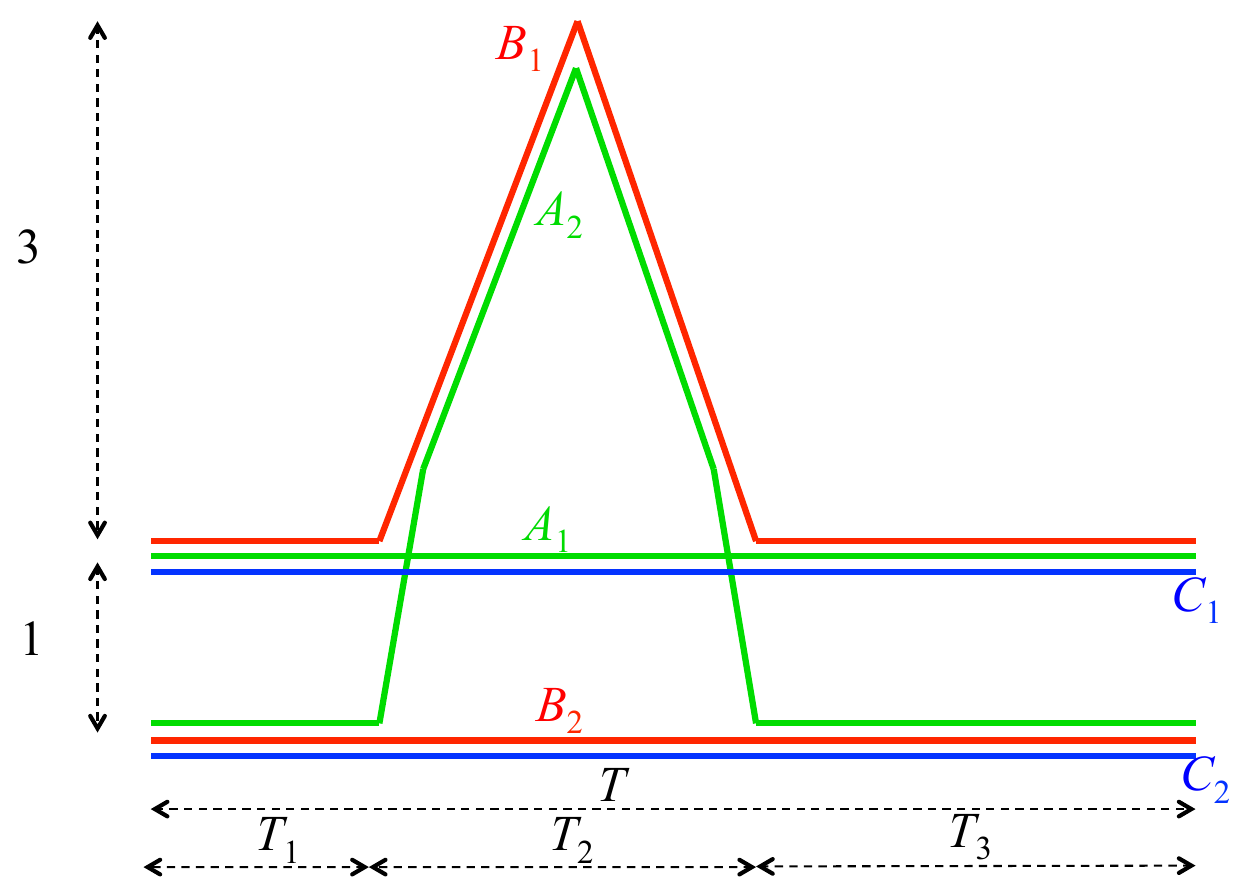}
	%
	%
		\caption{Example that shows that (a) the CLEAR MOT heuristic is bad and (b) MOTP is not a metric. For visualization purposes, trajectories that are close
		are actually on top of each other.}
		\vspace{-0.5cm}
		\label{fig:clearmot_bad_metric}
	\end{center}
\end{figure}

Since we assume $1 < thr_{MOT} <  2$, the CLEAR MOT builds an association
sequence $\Sigma_{\text{MOT}}$ that initially matches $A_1$ to $B_1$
but some time after $T_1$ matches $A_1$ to $B_2$ to minimize
the distance between matched points.
After $T_1+T_2$, this last association is anchored given that $1 < thr_{MOT}$. 
Hence,
\begin{equation*}
\Sigma_{\text{MOT}}=\{(1,2),...,(1,2),(2,1),(2,1),...\}.
\end{equation*}

The number of times that $\Sigma_{\text{MOT}}(t) \neq \Sigma_{\text{MOT}}(t+1)$ is $1$, thus $swi(\Sigma_{\text{MOT}}) = \frac{1}{T-1} = \mathcal{O}(1/T)$. 
For $t$ after $T_1+T_2$,  $\sum_{i} \|A_i(t) - B_{\Sigma_{{\text{MOT}}_i}(t)}(t)\| = 2$.
Thus $dist(\Sigma_{\text{MOT}},A,B) >  \frac{2(T-T_1-T_2)}{T} = 2 - \mathcal{O}(1/T)$. 
However, if we choose $\Sigma = \{(1,2),...,(1,2)\}$, we have $swi(\Sigma) = 0$ and
$dist(\Sigma,A,B) < \frac{T_2 \times 4}{T} = \mathcal{O}(1/T)$. The result of the lemma follows.%

To see that the proof holds for any $m$, we extend $A$ and $B$ as follows. 
Without loss of generality, we assume $m$ is even.
If the 1D trajectory $A_i$ above is equal to $(A_i(1),...,A_i(T))$, define the 2D trajectory $A^{(k)}_i$ such that
\begin{equation*}
A^{(k)}_i(t) =[A_i(t); C k] \in \mathbb{R}^2,
\end{equation*}
where $C$ is a constant large enough such that trajectories for different $k$'s are not close to each other under the $d^+$ measure. We define $B^{(k)}_i$ similarly. 
Define $$A = \{A^{(0)}_1,A^{(0)}_2,A^{(1)}_1,A^{(1)}_2,...,A^{(m-1)}_1,A^{(m-1)}_2\},$$ and similarly $B$. 
In this setting, the bounds previously computed on $swi(\Sigma_{\text{MOT}})$ and $swi(\Sigma)$ change by a factor of $m$, while the bounds on $dist(\Sigma_{\text{MOT}},A,B)$ and $dist(\Sigma,A,B)$ change by a factor of $m/2$. Thus the statement of the lemma still holds.

To extend the proof for an odd $m$, append to $A$ and $B$ two equal trajectories far away from all other trajectories such that they are matched to each other. They do not contribute to either $swi$ or $dist$. 

Finally, to see that the proof also holds for any $thr_{MOT}$, we rescale space-axis in Figure \ref{fig:clearmot_bad_metric}. This changes the bounds on $dist$ by a factor of $thr_{MOT}$ and leaves the bounds on $swi$ unchanged, thus obtains the result.
\end{proof}

%
%
\label{app:proof_of_th_motp_not_metric}

\begin{proof}[Proof of lemma \ref{th:MOPT_no_metric}]
We construct $A,B,C \in \mathcal{S}$ for which the triangle inequality is violated. Specifically, $\mathcal{D}(A,B) > \mathcal{D}(A,C) + \mathcal{D}(C,B)$.

Without loss of generality, we assume $1 < thr_{MOT} < 2$. To see that the result holds for all $thr > 0$, we employ the space-rescaling trick in the proof of Lemma \ref{th:MOT_inconsistent}.

Consider the sets $A = \{A_1, A_2\}$, $B = \{B_1, B_2\}$ and
$C = \{C_1, C_2\}$ as in Fig. \ref{fig:clearmot_bad_metric}, where the trajectories extend to some large $T$. 
 $T_1$ and $T_2$ are fixed. $T_3$ grows with $T$.
To make calculations simpler, we work with $\mathcal{D}$ divided by $T$ in Def. \ref{def:MOT_metric}.

Let us compute $\mathcal{D}(A,B)$ first. The association $\Sigma_{\text{MOT}}$ for this distance is $\{(1,2),..,(1,2),(2,1),...\}$ because (i) we start with the association $\{A_1 \leftrightarrow B_1,A_2 \leftrightarrow B_2\}$, (ii) at some point after $T_1$ we need to change the association to $\{A_1 \leftrightarrow B_2,A_2 \leftrightarrow B_1\}$
because the initial association exceeds $thr_{MOTP} < 2$ and (iii) for times after the $T_1+T_2$ the association 
$\{A_1 \leftrightarrow B_2,A_2 \leftrightarrow B_1\}$ is anchored because $thr_{MOT} > 1$. Therefore, $\mathcal{D}(A, B) > \frac{2(T-T_2)}{T}$.

Now we compute $\mathcal{D}(A,C)$. The association for $\mathcal{D}(A,C)$ is $\Sigma_{\text{MOT}} = \{(1,2),...,(1,2)\}$ because (i) we start with $\{A_1 \leftrightarrow C_1,A_2 \leftrightarrow C_2\}$, (ii) the association $A_1 \leftrightarrow C_1$ is always anchored because the distance between $A_1$ and $C_1$ is always zero and thus always smaller than $thr_{MOT} > 1$ and (iii) after some point, when the distance between $A_2$ and $C_2$ exceeds $thr_{MOT} < 2$, MOTP still keeps the association $A_2 \leftrightarrow C_2$ because $A_1$ and $C_1$ are already anchored.
Under this association, numerical computation leads to $\mathcal{D}(A, C) < \frac{4T_2}{T}$.  Similarly, $\mathcal{D}(C,B)  < \frac{4 T_2}{T}$.

Therefore, for $T$ large enough we have
$\mathcal{D}(A, B) > \frac{2(T-T_2)}{T} > 
\frac{4T_2}{T} + \frac{4T_2}{T} > \mathcal{D}(A, C) + \mathcal{D}(C, B)$.
\end{proof}

%
%

\section{Properties of our metrics: $\mathcal{D}_{nat}$} \label{app:proof_of_th_d_nat_is_metric}

To prove Theorem \ref{th:Dnaturalismetric},
we need the following lemma.

\begin{lemma} \label{th:extdismetric}
The map $d^+$ is a metric on $\mathbb{R}^p \cup \{*\}$.
\end{lemma}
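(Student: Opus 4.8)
The plan is to verify the four metric axioms directly, treating $d^+$ as a piecewise definition according to whether each argument equals $*$ or lies in $\mathbb{R}^p$. Non-negativity is immediate since $d \geq 0$, since $M > 0$, and since $\min\{2M,\cdot\} \geq 0$. Symmetry follows because $d$ is symmetric and the three defining rules are themselves symmetric in their arguments. For the coincidence property I would argue that the zero set of $d^+$ is exactly the diagonal: if exactly one of $a,b$ equals $*$ the value is $M > 0$; if both are real the value is $\min\{2M,d(a,b)\}$, which vanishes iff $d(a,b)=0$ iff $a=b$ (using that $d$ is a metric and $2M>0$); and $d^+(*,*)=0$ with $a=b=*$. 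Hence $d^+(a,b)=0 \iff a=b$.

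The hard part is the triangle inequality $d^+(a,c) \leq d^+(a,b) + d^+(b,c)$, which I would establish by a case analysis on how many of $a,b,c$ equal $*$. The first ingredient is the standard fact that capping a metric preserves the triangle inequality: for $a,b,c \in \mathbb{R}^p$, writing $d_1 = d(a,b)$, $d_2 = d(b,c)$, $d_3 = d(a,c)$ with $d_3 \leq d_1+d_2$, one checks $\min\{2M,d_3\} \leq \min\{2M,d_1\} + \min\{2M,d_2\}$ by splitting on whether each $d_i$ exceeds $2M$. This handles the all-real case.

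The genuinely delicate configuration is when the middle point $b=*$ while $a,c \in \mathbb{R}^p$: here the left side is $d^+(a,c) = \min\{2M,d(a,c)\} \leq 2M$, while the right side is $d^+(a,*) + d^+(*,c) = M+M = 2M$, so the inequality holds \emph{precisely because} the cap was chosen to equal $2M$. This is the structural reason the definition uses $2M$ rather than some smaller constant, and I expect it to be the main obstacle in the sense that it is the only case where the exact value of the cap matters. All remaining configurations --- exactly one of $a,c$ equal to $*$, or two or three of the points equal to $*$ --- reduce to trivial inequalities such as $M \leq M + (\text{something} \geq 0)$ or $0 \leq 2M$, using only $M>0$ and non-negativity; I would list these briefly to confirm completeness. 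Once every pattern of $*$'s is checked, the triangle inequality holds in general and $d^+$ is a metric on $\mathbb{R}^p \cup \{*\}$.
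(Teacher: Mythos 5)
Your proof is correct and follows essentially the same route as the paper's: direct verification of the axioms with a case analysis over the placement of $*$ among the three points, reducing the triangle inequality to the same two non-trivial cases (all points in $\mathbb{R}^p$, handled by subadditivity of the $2M$-cap, and the middle point equal to $*$, handled by $M+M=2M$). Your added remark that the middle-$*$ case is exactly why the cap must be $2M$ is a nice observation, but the argument itself is the paper's.
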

\begin{proof}
We verify that $d^+$ satisfies the four conditions of a metric.
Let $x'',x',x \in \mathbb{R}^p \cup \{*\}$. 

Non-negativity and symmetry are obvious.

To verify the coincidence property, observe that $d^+(x,x') = 0$ implies either $x=x'=*$ or, since $M>0$, $d^+(x,x') = d(x,x') = 0$. Because $d$ is a metric, this in turn implies that $x=x'=0$. In other words, $d^+(x,x') = 0 \Leftrightarrow x=x'$. 

To verify the subadditivity property, we need to
consider eight different cases of the triangle inequality. We first consider the non-trivial cases.
If $x,x',x'' \in \mathbb{R}^p$, then 
\begin{align*}
d^+(x,x'') &= \min\{2M,d(x,x'') \}\\
&\leq \min \{2M,d(x,x') + d(x',x'')\} \\
&\leq \min\{2M,d(x,x')\} + \min\{2M,d(x',x'')\}\\
&= d^+(x,x') + d^+(x',x'').
\end{align*}
If $x'\hspace{-0.1cm}=\hspace{-0.1cm}*$ and $x,x'' \in \mathbb{R}^p$, then 
$M \hspace{-0.1cm}=\hspace{-0.1cm} d^+(x',x'')\hspace{-0.1cm}=\hspace{-0.1cm}d^+(x,x') $ and
\begin{align*}
d^+(x,x'') &= \min\{2M,d(x,x'') \}\leq  d^+(x,x') + d^+(x',x'').
\end{align*}
It is easy to check the other six cases.
\end{proof}

\vspace{-0.4cm}
\begin{proof}[Proof of Theorem \ref{th:Dnaturalismetric}]
Let $A,B,C$ be three elements in $\mathcal{S}$. We verify the four conditions of a metric for $\Dnat$.

\emph{Coincidence property}:
We show that $\Dnat(A,B) = 0$ if and only if $A = B$.
Recall that $A$ and $B$ are unordered sets
of trajectories. Hence $A = B$ implies that there is an isomorphism
between $A$ and $B$. In other words, they are equal apart from
a relabeling of the elements.
If $A = B$ and we set $\Sigma = (\sigma,\sigma,...,\sigma)$
then the function to minimize on the right-hand-side of equation \eqref{eq:defDnat} is equal to zero. This $\sigma$ is an isomorphism between $A$ and $B$.
Since the minimum of \eqref{eq:defDnat}
must always be non-negative,
we conclude that $A = B \Rightarrow \Dnat(A,B) = 0$.
Conversely, assume that $\Dnat(A,B) = 0$ and let $\Sigma^*=(\Sigma^*(1),...,\Sigma^*(T))$ be a
minimizer in \eqref{eq:defDnat}. $\Dnat(A,B) = 0$ implies that
$\mathcal{K}(\Sigma^*) = 0$. Therefore $\Sigma^*_i(t) = \Sigma^*_i(1)$,
for all $t$ and $i$. Since the labeling of the trajectories does not affect the computation of $\Dnat$, we assume without loss of generality that 
their labeling is such that we can write $\Sigma^*_i(t) = i$.
$\Dnat(A,B) = 0$ also implies that, for all $t$ and $i$, we have
$d^+(A^+_i(t),B^+_{\Sigma^*_i(t)}(t)) = d^+(A^+_i(t),B^+_{i}(t)) = 0$.
Since $d^+$ is a
metric, this in turn implies that $A^+_i(t) = B^+_i(t)$ for all $i$ and $t$, which is the same as saying that $A^+ = B^+$. Hence $A=B$.
To be more specific,
$A$ is equal to $B$ apart from a relabeling of its trajectories.

\emph{Symmetry property}: Since $\Dnat$ only depends on $A$ and $B$ through $d^+$ we have $\Dnat(A,B)=\Dnat(B,A)$. 

\emph{Subadditivity property}: We prove $\Dnat(A,C) \leq \Dnat(A,B) + \Dnat(B,C)$. 
First, notice that 
we can add any extra number of $*$-only
trajectories to $A$, $B$ or $C$ without changing $\Dnat$. 
Recall that $m$ is number of trajectories in $A^+$, $B^+$ and $C^+$.
In this part of the proof, $m$ should be the sum of the cardinalities of the two sets of highest cardinality among $A$, $B$ and $C$. In Section \ref{sec:setup_and_notation}, $m$ was just the sum of the cardinalities of $A$ and $B$. $T$ is the maximum time index observed in $A$, $B$ and $C$.

Since $d^+$ is a metric, we can write that
\begin{align} \label{eq:addingBtoproofDnat}
&d^+(A^+_i(t),C^+_{\Sigma_i(t)}(t)) \leq d^+(A^+_i(t),B^+_{\Sigma'_{\Sigma_i(t)}(t)}(t))\nonumber\\
& + d^+(B^+_{\Sigma'_{\Sigma_i(t)}(t)}(t),C^+_{\Sigma_i(t)}(t))
\end{align}
for any $\Sigma' = (\Sigma'(1),...,\Sigma'(T)) \in \Pi^T$ and for all $i$ and $t$. Now notice that
\begin{align*}
\sum^m_{i=1}d^+(B^+_{\Sigma'_{\Sigma_i(t)}(t)}(t),C^+_{\Sigma_i(t)}(t))
= \sum^m_{i=1}d^+(B^+_{\Sigma'_{i}(t)}(t),C^+_{i}(t)).
\end{align*}
Using this together with \eqref{eq:addingBtoproofDnat}, we can write
\begin{align*}
&\sum^T_{t=1} \sum^m_{i=1} d^+(A^+_i(t),C^+_{\Sigma_i(t)}(t)) \leq \\
&\sum^T_{t=1} \sum^m_{i=1}d^+(A^+_i(t),B^+_{\Sigma'_{\Sigma_i(t)}(t)}(t)) + d^+(B^+_{\Sigma'_{i}(t)}(t),C^+_{i}(t)).
\end{align*}
Let us define $\Sigma'' = (\Sigma''(1),...,\Sigma''(T)) \in \Pi^T$ where
$\Sigma''_i(t) = \Sigma'_{\Sigma_i(t)}(t)$. This means $\Sigma'' = \Sigma' \circ \Sigma$. We can use $\Sigma'$
to rewrite the expression above as
\begin{align} \label{eq:distanceineqproofDnat}
&\sum^T_{t=1} \sum^m_{i=1} d^+(A^+_i(t),C^+_{\Sigma_i(t)}(t))
\leq \sum^T_{t=1} \sum^m_{i=1}d^+(A^+_i(t),B^+_{\Sigma''_{i}(t)}(t))\nonumber\\
&+ \sum^T_{t=1} \sum^m_{i=1}d^+(B^+_{\Sigma'_{i}(t)}(t),C^+_{i}(t)).
\end{align}
Using Definition \ref{th:property_of_Sigma}, we can write
\begin{align} \label{eq:matchineqproofDnat}
&\mathcal{K}(\Sigma) = \mathcal{K}(\Sigma'^{-1} \circ \Sigma' \circ \Sigma) \leq \mathcal{K}(\Sigma'^{-1}) +  \mathcal{K}(\Sigma' \circ \Sigma) \nonumber\\
&= \mathcal{K}(\Sigma') +  \mathcal{K}(\Sigma'').
\end{align}
Now we add both sides of \eqref{eq:distanceineqproofDnat} and
\eqref{eq:matchineqproofDnat} and obtain
{
$
\mathcal{K}(\Sigma) + \sum^T_{t=1} \sum^m_{i=1} d^+(A^+_i(t),C^+_{\Sigma_i(t)}(t))
 \leq 
 \mathcal{K}(\Sigma'') + \sum^T_{t=1} \sum^m_{i=1}d^+(A^+_i(t),B^+_{\Sigma''_{i}(t)}(t))
 + \mathcal{K}(\Sigma')+\sum^T_{t=1} \sum^m_{i=1}d^+(B^+_{\Sigma'_{i}(t)}(t),C^+_{i}(t))$.
}

Finally, we find the minimum of both sides of the inequality over all pairs
of $\Sigma$ and $\Sigma'$. Recall the relationship $\Sigma'' = \Sigma' \circ \Sigma$ and the fact that we can choose $\Sigma'$ independently of $\Sigma$. Consequently,
\begin{align*} 
&\min_{\Sigma \in \Pi^T} \mathcal{K}(\Sigma) + \sum^T_{t=1} \sum^m_{i=1} d^+(A^+_i(t),C^+_{\Sigma_i(t)}(t))\\
& \leq \min_{\Sigma''\in \Pi^T} \mathcal{K}(\Sigma'') + \sum^T_{t=1} \sum^m_{i=1}d^+(A^+_i(t),B^+_{\Sigma''_{i}(t)}(t))\\
& + \min_{\Sigma'\in \Pi^T} \mathcal{K}(\Sigma')+\sum^T_{t=1} \sum^m_{i=1}d^+(B^+_{\Sigma'_{i}(t)}(t),C^+_{i}(t)).
\end{align*}
Hence, $\Dnat(A,C) \leq \Dnat(A,B) + \Dnat(B,C)$.
\end{proof}

%
%

\label{app:proof_for_K_propreties}

\begin{proof}[Proof of Theorem \ref{th:K_count}]
The first two properties of the permutation measure are trivial to verify. To verify the third property, it is sufficient to prove that 
\begin{align*}
\mathbb{I}\left((\Sigma'(t+1) \circ \Sigma(t+1)) \circ (\Sigma'(t) \circ \Sigma(t))^{-1} \neq I\right)\\
\leq \mathbb{I}(\Sigma(t+1) \circ \Sigma(t)^{-1} \neq I) +\mathbb{I}(\Sigma'(t+1) \circ \Sigma'(t)^{-1} \neq I).
\end{align*}
Since the left-hand-side is at most $1$, we only need to consider the case in which the right-hand-side is less than $1$, i.e.,
$$\mathbb{I}(\Sigma(t+1) \circ \Sigma(t)^{-1} \neq I) +\mathbb{I}(\Sigma'(t+1) \circ \Sigma'(t)^{-1} \neq I)=0.$$
In this case, it follows that $\Sigma(t+1) = \Sigma(t)$ and $\Sigma'(t+1) = \Sigma'(t)$. Consequently, the left-hand-side equals $0$ as well in this case. Hence the property is verified.
\end{proof}
\begin{proof}[Proof of Theorem \ref{th:K_trans}]
The first two properties of the permutation measure
are immediate to check from the definition of $\mathcal{K}_{trans}$. To prove the third property it suffices to show that
\begin{multline*}
k_{Cayley}\left((\Sigma'(t+1) \circ \Sigma(t+1)) \circ (\Sigma'(t) \circ \Sigma(t))^{-1}\right)\\
\leq k_{Cayley}(\Sigma(t+1)) \circ  \Sigma(t)^{-1}) + k_{Cayley}(\Sigma'(t+1) \circ \Sigma'(t))^{-1}).
\end{multline*}
To prove this we use of the following two facts.

Fact 1: if $\sigma,\sigma' \in \Pi$ then $k_{Cayley}(\sigma\circ\sigma') \leq k_{Cayley}(\sigma) + k_{Cayley}(\sigma')$.
To see why this is true, notice that this inequality can be rewritten as 
$k_{Cayley}(a \circ c^{-1}) \leq k_{Cayley}(a \circ b^{-1}) + k_{Cayley}(b \circ c^{-1})$ where
$a = \sigma\circ\sigma'$, $b = \sigma'$ and $c = I$, the identity permutation. Then, it is basically saying that shortest set of transpositions that take $\sigma_1$ to $\sigma_3$ is smaller than any set of transpositions that first take $\sigma_1$ to $\sigma_2$ and then $\sigma_2$ to $\sigma_3$.

Fact 2: if $\sigma,\sigma' \in \Pi$ then $k_{Cayley}(\sigma\circ \sigma' \circ\sigma^{-1}) = k_{Cayley}(\sigma' )$. This is true because $\sigma\circ \sigma' \circ\sigma^{-1}$ is just a relabeling of the permutation  $\sigma'$ and $k_{Cayley}$ is invariant to relabeling.

Let 
\vspace{-0.3cm}
\begin{align*}
&\sigma_A=\Sigma(t)\circ \Sigma(t+1)^{-1}, \;\;\;\;\; \sigma_B = \Sigma'(t)\circ \Sigma'(t+1)^{-1},\\
&\text{and }\sigma_C=\Sigma'(t)\circ \sigma_A\circ \Sigma'(t)^{-1}.
\end{align*}
Observe that the permutation $\sigma_C \circ \sigma_B$ satisfies
$$\sigma_C \circ \sigma_B \circ \left((\Sigma'(t+1) \circ \Sigma(t+1)) \circ (\Sigma'(t) \circ \Sigma(t))^{-1}\right) = I.$$
Finally, we can apply Fact 2 followed by Fact 1, and obtain (dropping $_{Cayley}$ for clarity).
\begin{align*}
 &k\left((\Sigma'(t+1) \circ \Sigma(t+1)) \circ (\Sigma'(t) \circ \Sigma(t))^{-1}\right)\\
&= k(\sigma^{-1}_B \circ \sigma^{-1}_C)\leq k(\sigma^{-1}_B) + k( \sigma^{-1}_C)= k(\sigma^{-1}_B) + k( \sigma^{-1}_A)\\
&= k(\Sigma(t+1)\circ\Sigma(t)^{-1}) + k(\Sigma'(t+1)\circ\Sigma'(t)^{-1}).
\end{align*}
\vspace{-0.3cm}
\end{proof}
%

%
%

\label{app:necessary_conditions_for_D_nat_metric}

\vspace{-0.3cm}
\begin{proof}[Proof of Theorem \ref{th:K_maxcount_not_proper}]
We give a counter example that violates property (iii)
for $\beta = 1$. It is also easy to come up with similar
counter examples that violate property (iii) for any value
of $\beta \geq 1$. Let $I = (1,2)$ be
the identity permutation and let $\sigma = (2,1)$ be the permutation that swaps $1$ and $2$. Let $\Sigma = (I,\sigma,\sigma)$ and let $\Sigma' = (I,I,\sigma)$.
We have $\mathcal{K}_{maxcount}(\Sigma) = 1$ and 
$\mathcal{K}_{maxcount}(\Sigma') = 1$ but
$\mathcal{K}_{maxcount}(\Sigma' \circ \Sigma) = \infty > \mathcal{K}_{maxcount}(\Sigma') + \mathcal{K}_{maxcount}(\Sigma)$.
\end{proof}
%

\begin{proof}[Proof of Theorem \ref{th:K_maxcount_leads_to_not_metric}]
We provide a special case where the triangle inequality is violated.
Let $A$, $B$ and $C$ be three sets of trajectories. We assume $\beta = 1$,
but it is easy to change $A$, $B$ and $C$ such that
the proof holds for any $\beta$,\footnote{It is easy to see that
we do not in fact need $d$ to be the Euclidean distance for the proof to hold.}.
Let $I = (1,2)$ be
the identity permutation and let $\sigma^0 = (2,1)$
be the permutation that swaps $1$ and $2$.
Let $A = \{A_1,A_2\}$, $B = \{B_1,B_2\}$, $C = \{C_1,C_2\}$, where
\begin{align*}
&A_1 = (2, -2, -2),\;\; B_1 = (2,2,2), \;\;\;\;\;\;\;\;\;\;C_1 = (2, 2, -2),\\
&A_2 = (-2, 2, 2), \;\;\;\;\,B_2 = (-2,-2,-2), \;\;C_2 = (-2, -2, 2).
\end{align*}
Now consider equation \eqref{eq:defDnat} with $\mathcal{K} = \mathcal{K}_{maxcount}$ and $\beta = 1$.
The optimization problem for $\mathcal{D}_{nat}(A,B)$,
\begin{align*}
\min_{\Sigma \in \Pi^T} \Big\{ \mathcal{K}(\Sigma)
+\sum^3_{t=1} \sum^{2}_{i=1}d^+(A^+_i(t),B^+_{\Sigma_i(t)}(t)) \Big\}
\end{align*}
has a minimum of $1$ at
$\Sigma = (I,\sigma^0,\sigma^0)$. We can see this because $\mathcal{K}$ forces us to either
do one switch only or no switch at all. Doing no switch at all makes us
incur a cost larger than $1$ in the distance term.  
Similarly, the optimization problem for $\mathcal{D}_{nat}(B,C)$
has a minimum of $1$ at
$\Sigma = (I,I,\sigma^0)$. When we solve the optimization
problem for $\mathcal{D}_{nat}(A,C)$, we are only allowed
to perform one change in the association between $A$
and $C$, otherwise the term $\mathcal{K}_{maxcount}$
makes us pay a very large cost. With only one change
in the association, we incur a distance of $4$
for $t = 1$ or $t = 2$ or $t = 3$. Hence, $\mathcal{D}_{nat}(A,C) \geq 4 > 1 + 1 = \mathcal{D}_{nat}(A,B) + \mathcal{D}_{nat}(B,C)$.
\end{proof}
%

%
%
%
\section{Properties of our metrics: $\mathcal{D}_{comp}$} \label{app:proof_that_D_comp_is_metric}

\begin{proof}[Proof of Theorem \ref{th:D_comp_is_metric}]
Let $A, B$ and $C$ be any elements of $\mathcal{S}$.\\
\emph{Coincidence property}: If $\mathcal{D}_{comp}(A,B) = 0$,
then $W(t) = W(1)$ for all $t$. Hence, $D^{AB}_{ij}(t) = 0$ for all $i,j$ such that
$W_{ij}(1) > 0$. Recall that if $W_{ij}$ is a doubly stochastic matrix, then there exists a permutation $\sigma = \left(\sigma_1,\dots,\sigma_m\right) \in \Pi$, such that
$W_{i\sigma_i} > 0$ for all $i$.  Therefore, $D^{AB}_{i\sigma_i}(t) = 0$ for all $t$ and $i$. In other words,
$A$ and $B$ are identical apart from a relabeling of their elements.\\
\emph{Symmetry property}: Using the properties of {\bf trace}, we have
\begin{align*}
&\text{\bf tr}(W^{\dagger}(t)D^{AB}(t)) =
\text{\bf tr}((W^{\dagger}(t)D^{AB}(t))^{\dagger} )\\
&= \text{\bf tr}(D^{AB}(t)^{\dagger}W(t))= \text{\bf tr}(D^{BA}(t)W(t))
= \text{\bf tr}(W(t) D^{BA}(t)). 
\end{align*}

Minimizing with respect to $\{W(t)\}$ is the same as minimizing with
respect to $\{W(t)^{\dagger}\}$. Thus $\mathcal{D}_{comp}(A,B) = \mathcal{D}_{comp}(B,A)$.\\
\emph{Subadditivity property}: We prove that
$\mathcal{D}_{comp}(A,C) \leq \mathcal{D}_{comp}(A,B) + \mathcal{D}_{comp}(B,C)$. 

First, notice that 
we can add any extra number of $*$-only
trajectories to $A$, $B$ or $C$ without changing $\mathcal{D}_{comp}$. 
Recall that $m$ is number of trajectories in $A^+$, $B^+$ and $C^+$.
In this part of the proof, $m$ should be the sum of the cardinalities of the two sets of highest cardinality among $A$, $B$ and $C$. In Section \ref{sec:setup_and_notation}, $m$ was just the sum of the cardinalities of $A$ and $B$. $T$ is the maximum time index observed in $A$, $B$ and $C$.

First note that, since $d^+$ is a metric we have
that $D^{AC}(t)_{ij} \leq D^{AB}_{ik}(t) + D^{BC}_{kj}(t)$ for any $k$.
Let $W_1 = (W_1(1),...,W_1(T))  \in \mathcal{P}^T$ and
$W_2 = (W_2(1),...,W_2(T)) \in  \mathcal{P}^T$. We multiply both
sides of the previous inequality by $W_1(t)_{ik}W_2(t)_{kj}$ and
sum over $i,j,k$ to obtain
\vspace{-0.2cm}
\begin{multline*}
\sum^{m}_{i,j,k=1} {W_1}_{ik}{W_2}_{kj} {D^{AC}}_{ij}\leq\\ \sum^{m}_{i,j,k=1} {W_1}_{ik}{W_2}_{kj} D^{AB}_{ik}
+ {W_1}_{ik}{W_2}_{kj}D^{BC}_{kj},
\end{multline*}
%
%
where we omitted $t$ in $W$ and $D$.
Since $W_1(t), W_2(t) \in \mathcal{P}$, we have $\sum^m_{j=1} W_2(t)_{kj} = 1$ and
$\sum^m_{i=1} W_1(t)_{ik} = 1$. To simplify this expression, and again omitting $t$ in $W$ and $D$, we re-write the previous inequality in matrix notation as
\begin{multline*}
\text{\bf tr}\left(({W_1}{W_2})^{\dagger} D^{AC}\right) \leq
\text{\bf tr}({W_1}^{\dagger} D^{AB}) + 
\text{\bf tr}({W_2}^{\dagger} D^{BC}).
\end{multline*}
From our assumption on $\|.\|$ (c.f. Property \eqref{eq:propofnormforDcomp}), we have
\begin{multline*}
\|W_1(t+1)W_2(t+1) - W_1(t)W_2(t)\| \leq \|W_1(t+1) - W_1(t)\| 
\\+ \|W_2(t+1) - W_2(t)\|.
\end{multline*}
Adding the last two inequalities, summing over $t$ and minimizing over $W_1, W_2 \in \mathcal{P}^T$ 
we have
{\small
\begin{align*}
&\min_{W_1,W_2 \in \mathcal{P}^T} \sum^{T-1}_{t=1} \|W_1(t+1)W_2(t+1) \\
&- W_1(t)W_2(t) \| +
\sum^T_{t=1} \text{\bf tr}\left((W_1(t)W_2(t))^{\dagger} D^{AC}(t)\right)\\
&\leq \min_{W_1 \in \mathcal{P}^T}
\sum^{T-1}_{t=1} \|W_1(t+1)- W_1(t) \|+
\sum^T_{t=1} \text{\bf tr}(W_1(t)^{\dagger} D^{AB}(t)) \\
&+\min_{W_2 \in \mathcal{P}^T}
\sum^{T-1}_{t=1} \|W_2(t+1)- W_2(t) \|
+
\sum^T_{t=1} \text{\bf tr}(W_2(t)^{\dagger} D^{BC}(t)).
\end{align*}
}
Note that for any $W_1(t), W_2(t) \in \mathcal{P}$, we have
$W_1(t) W_2(t) \in \mathcal{P}$.
Therefore, in the minimization performed on the left hand side,
we can replace $W_1(t) W_2(t)$ by a single $W(t)$.
The subadditivity property follows.
\end{proof}
%

%
%

\label{app:proof_that_norms_for_Dcomp_are_many}

\vspace{-0.3cm}
\begin{proof}[Proof of Lemma \ref{th:many_norms_satisfy_D_comp}]
Let $w_1,w_2,w'_1, w'_2 \in \mathcal{P}$, whose norms are less than $1$.
\begin{multline*}
\|w'_2w_2 - w'_1 w_1\| \leq \|w'_2(w_2 -w_1 + w_1)- w'_1 w_1\| \\
\leq \|w'_2(w_2 -w_1)\| + \| w'_2 w_1 - w'_1 w_1\|
\leq \|w'_2\| \|(w_2 -w_1)\| \\+ \| (w'_2 -w'_1)\| \|w_1\|
\leq \|(w_2 -w_1)\| + \| (w'_2 -w'_1)\|.
\end{multline*}
The third inequality above is obtained using sub-multiplicity.
\end{proof}

%
%

\label{app:proof_that_D_comp_equals_LP}

\vspace{-0.3cm}
\begin{proof}[Proof of Theorem \ref{th:dcompisLP}]
The constraints which define $\mathcal{P}$
are a set of linear constraints. In addition, the first term
in the objective is a linear function of $\{W(t)\}$.
Notice also that we can replace
the term $\sum^{T-1}_{t=1} \|W(t+1) - W(t)\|$ in the objective
by $\sum^{T-1}_{t=1} e_t$,
if we add the additional constraints
that $\|W(t+1) - W(t)\| \leq e_t$ for all $t$.
We can also represent these additional constraints
as linear constraints.
Specifically, since \vspace{-0.1cm}$$\|W(t+1) - W(t)\| = \max_{j} \sum_i |W_{ij}(t+1) - W_{ij}(t)|,$$
each of these constraints can be replaced by 
$\sum_i h_{ij}(t) \leq e_t$ for all $t$ and $j$ 
if we add the additional constraints
$$|W_{ij}(t+1) - W_{ij}(t)| \leq h_{ij}(t)$$ for all $i$, $j$ and $t$.
Each of these constraints can be replaced two linear constraints,
namely 
\begin{align*}
W_{ij}(t+1) - W_{ij}(t)) \leq h_{ij}(t),\\
-W_{ij}(t+1) + W_{ij}(t)) \leq h_{ij}(t)
\end{align*}

%

\end{proof}
%

%
%

\clearpage
\section{Some extra observations}  \label{app:extra}

In this section make some extra observations that are not essential
to understand what was presented above. We use the same setup and notation as explained in Section \ref{sec:setup_and_notation} of the main paper. 

%
%

\subsection*{Another example of OSPA-ST producing counter-intuitive results}

Consider the setup in Figure \ref{fig:ospa_bad_example}
where two people, $A_1$ and $A_2$, pass by each other
and two trackers, $B$ and $C$, produce output trajectories
$B_1$, $B_2$, $C_1$ and $C_2$. Space is measure on the $y$-axis (bottom
to top) and time on the $x$-axis (left to right). To aid visualization, close lines should be considered on top of each other.

\begin{figure}[h]
\begin{center}
\includegraphics[height=6.2cm]{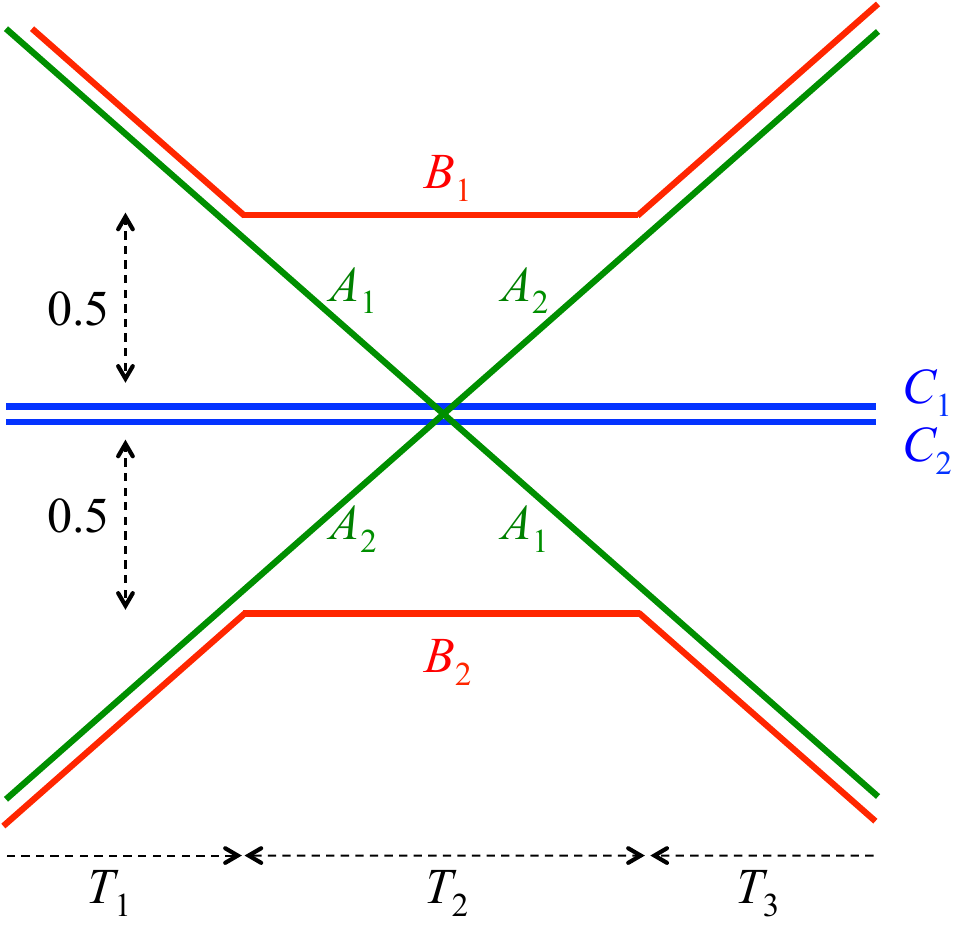}
\vspace{-0.3cm}
\caption{Counter example that shows that $\mathcal{D}_{MOTA}$ is not a metric.}
\label{fig:ospa_bad_example}
\end{center}
\end{figure}

Let $T_1 = T_3$.
By symmetry and the fact that OSPA-ST does not allow associations that change with time we can assume without loss of generality that in computing $\mathcal{D}_{OSPA-ST}(A,B)$ OSPA-ST associates $A_1$ to $B_1$ and $A_2$ to $B_2$ and that in computing $\mathcal{D}_{OSPA-ST}(A,C)$ OSPA-ST associates $A_1$ to $C_1$ and $A_2$ to $C_2$. 

If $T_2 << T_1 = T_3$, the average distance between $A$ and $B$ can be arbitrarily close to the average distance between $A$ and $C$. More formally,
$\mathcal{D}_{OSPA-ST}(A,C) / \mathcal{D}_{OSPA-ST}(A,B) \rightarrow 1$ as $T_1=T_3 \rightarrow \infty$. In other words, OSPA-ST says that tracker $C$ is
as good as tracker $B$ while our intuition says
that $B$ is the better tracker. $B$ produces good tracks for most of the time
and simply makes an identity switch during internval $T_2$. On the other hand,
$C$ is a trivial tracker that always outputs $0$ regardless of the input.

\subsection*{MOTA does not define a metric}

First recall how to compute MOTA between two sets of trajectories
$A$ and $B$. First compute the CLEAR MOT association $\Sigma_{MOT}$ between $A$ and $B$ as described in Appendix \ref{app:proof_of_th_mota_bad_association}.
Second, compute a positive linear combination, $\mu$, of three quantities $\nu_1$, $\nu_2$ and $\nu_3$ where: $\nu_1$ is the number of changes of association between trajectories; $\nu_2$ is the number of points of trajectories in $A$ unassociated to any point in $B$; and $\nu_3$ is the number of points of trajectories in $B$ unassociated to any point in $A$.
MOTA is defined as $1-\mu$ but since distances must decrease as $A$
and $B$ get similar we define $\mathcal{D}_{MOTA}(A,B) = \mu$.

\begin{lemma}\label{th:MOPT_no_metric}
$\mathcal{D}_{MOTA}$ is not a metric for any threshold $thr_{MOT} \hspace{-0.1cm}>\hspace{-0.1cm} 0$ or positive linear combination used to
define $\mu$.
\end{lemma}

\begin{proof}

We given an example of three sets
$A$, $B$ and $C$ such $\mathcal{D}_{MOTA}(A,B) > 0$ while
$\mathcal{D}_{MOTA}(A,C) = \mathcal{D}_{MOTA}(C,B) = 0$, hence the triangle inequality is violated. Our counter example works for $ 0.5 < thr_{MOT} < 1$ but by scaling
space we can prove the lemma for any $thr_{MOT}$.

Consider $A$, $B$ and $C$ as in Figure \ref{fig:toy_example} in the main paper, which we reproduce here
for convenience. To aid visualization, close lines should be considered on
top of each other and transitions in $A$ almost instantaneous. Time is on the
$x$-axis (left to right) and space is on the $y$-axis (bottom to top).
\begin{figure}[h]
\begin{center}
\includegraphics[height=3.8cm]{./toy_example.pdf}
\vspace{-0.3cm}
\caption{Counter example that shows that $\mathcal{D}_{MOTA}$ is not a metric.}
\end{center}
\end{figure}

When we compute $\mathcal{D}_{MOTA}(A,C)$,  $\Sigma_{MOT}$
associates $A_1$ to $C_1$ and  $A_2$ to $C_2$ for all times $t$.
(we can interchange $C_1$ and $C_2$ since they are equal).
There are no changes in association
because the distances computed are always less than $0.5$
and $0.5 < thr_{MOT}$, thus $\nu_1 = 0$.
In addition,  there are no unallocated tracks so $\nu_2 = \nu_3 = 0$.
Therefore $\mathcal{D}_{MOTA}(A,C) = \mu = 0$ regardless of the coefficients in the linear combination $\mu$.
Similarly we get $\mathcal{D}_{MOTA}(C,B) = 0$.

When we compute $\mathcal{D}_{MOTA}(A,B)$
we find the following: before $T_1$, $\Sigma_{MOT}$ associates
$A_1$ to $B_1$; after $T_1$ and before $T_1+T_2$, the distance
between $A_1$ and $B_1$ becomes $1 > thr_{MOT}$ and so
$\Sigma_{MOT}$ changes association and matches $A_1$ to $B_2$.
After $T_1+T_2$, the distance between $A_1$ and $B_2$
becomes $1 > thr_{MOT}$ and so $\Sigma_{MOT}$ changes association
back to $A_1$ matches to $B_1$. We change association twice and so
$\nu_1 >0$
We just focused on who $A_1$ matches to.
The remaining tracks are matched to each other so $\nu_2=\nu_3 = 0$.
In short $\mathcal{D}_{MOTA}(A,B) = \mu > 0$.

\end{proof}

\end{document}